\def\BState{\State\hskip-\ALG@thistlm}
\newif\ifdraft
\definecolor{orange}{rgb}{1,0.5,0}
\definecolor{violet}{RGB}{70,0,170}
 \newcommand{\PF}[1]{{\color{red}{\bf PF: #1}}}
 \newcommand{\pf}[1]{{\color{red} #1}}
 \newcommand{\ER}[1]{{\color{green}{\bf ER: #1}}}
 \newcommand{\JB}[1]{{\color{blue}{\bf JB: #1}}}
 \newcommand{\ShP}[1]{{\color{violet}{\bf SP: #1}}}
 \newcommand{\ErG}[1]{{\color{orange}{\bf EG: #1}}}
 \newcommand{\PF}[1]{}
 \newcommand{\pf}[1]{ #1 }
 \newcommand{\ER}[1]{}
 \newcommand{\JB}[1]{}
 \newcommand{\ShP}[1]{}
 \newcommand{\ErG}[1]{}
\newcommand{\comment}[1]{}
\newcommand{\bz}{\mathbf{z}}
\newtheorem{theorem}{Theorem}
\ifcvprfinal\pagestyle{empty}\fi
\begin{document}

\title{Lightweight Multi-View 3D Pose Estimation \\
	through Camera-Disentangled Representation}

\author{Edoardo Remelli$^{1}$
\and 
Shangchen Han$^{2}$
\and 
Sina Honari$^{1}$ 
\and 
Pascal Fua$^{1}$ 
\and 
Robert Wang$^{2}$\\
\and
$^{1}$CVLab, EPFL, Lausanne, Switzerland\\
$^{2}$Facebook Reality Labs, Redmond, USA\\
}

\maketitle

\begin{abstract}
   We present a lightweight solution to recover 3D pose from multi-view images captured with spatially calibrated cameras.
   Building upon recent advances in interpretable representation learning, we exploit 3D geometry to fuse input images into a unified latent representation of pose, which is disentangled from camera view-points. 
   This allows us to reason effectively about 3D pose across different views without using compute-intensive volumetric grids.  
   Our architecture then conditions the learned representation on camera projection operators to produce accurate per-view 2d detections, that can be simply lifted to 3D via a differentiable Direct Linear Transform (DLT) layer.
   In order to do it efficiently, we propose a novel implementation of DLT that is orders of magnitude faster on GPU architectures than standard SVD-based triangulation methods.
   We evaluate our approach on two large-scale human pose datasets (H36M and Total Capture): our method outperforms or performs comparably to the state-of-the-art volumetric methods, while, unlike them, yielding real-time performance.   
\end{abstract}


\section{Introduction}

Most recent works on human 3D pose capture has focused on monocular reconstruction, even though multi-view reconstruction is much easier, since multi-camera setups are perceived as being too cumbersome. The appearance of Virtual/Augmented Reality headsets with multiple integrated cameras challenges this perception and has the potential to bring back multi-camera techniques to the fore, but only if multi-view approaches can be made sufficiently lightweight to fit within the limits of low-compute headsets.  


\begin{figure}[t]
	\centering
		\begin{overpic}[clip, trim=2.0cm 4cm 16cm 6cm,width=0.5\textwidth]{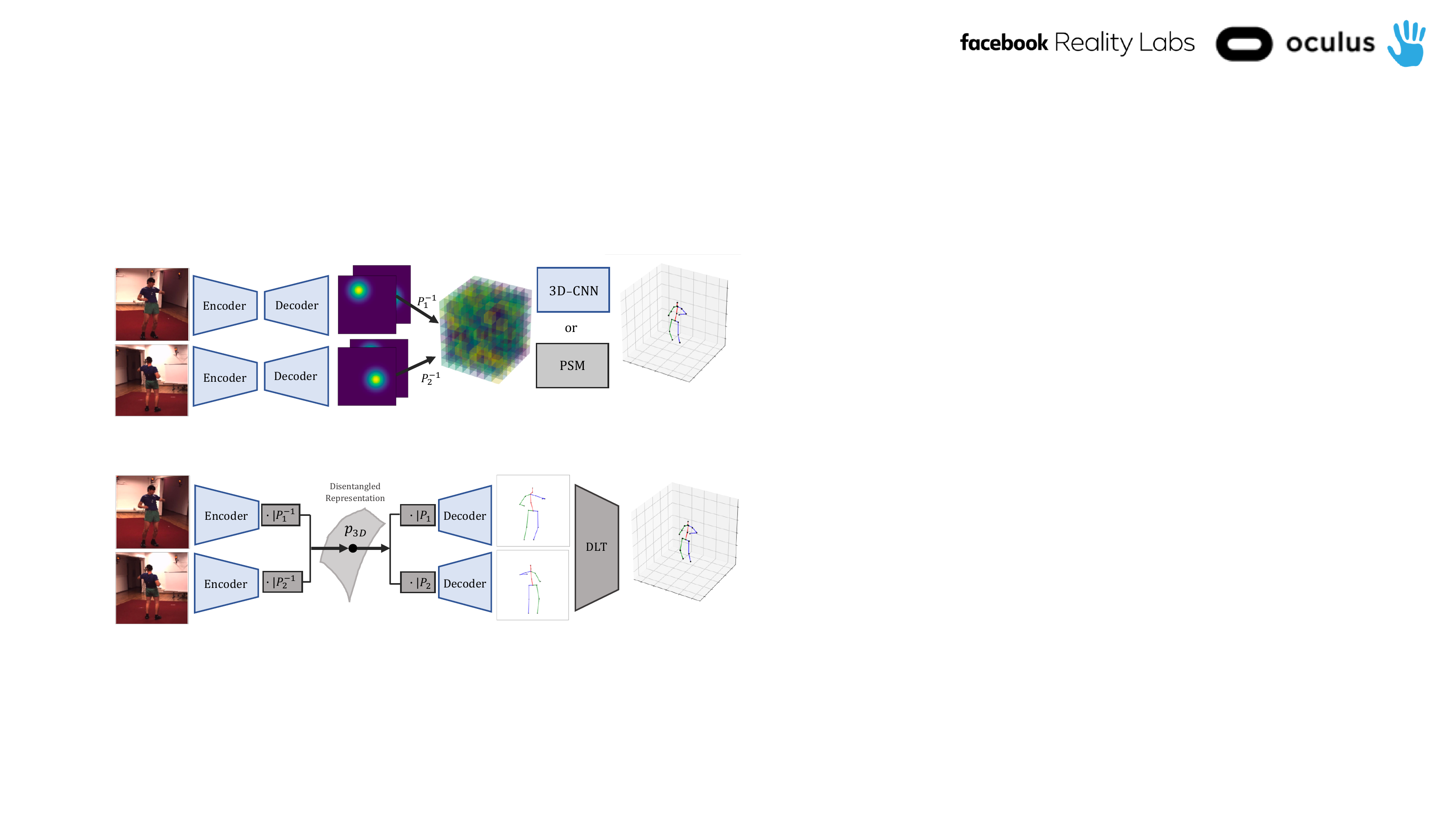}
		
		\put(24,29){a) \textbf{Volumetric Approaches}}
		\put(24,-2){b) \textbf{Canonical Fusion (ours)}}
		
	\end{overpic}

	\vspace{25pt}
	\caption{Overview of 3D pose estimation from multi-view images. The state-of-the-art approaches project 2D detections to 3D grids and reason jointly across views through computationally intensive volumetric convolutional neural networks \cite{Iskakov19} or Pictorial Structures (PSM) \cite{Qiu19, Pavlakos17}. This yields accurate predictions but is computationally expensive. We design a lightweight architecture that predicts 2D joint locations from a learned camera-independent representation of 3D pose and then lifts them to 3D via an efficient formulation of differentiable triangulation (DLT). Our method achieves performance comparable to volumetric methods, while, unlike them, working in real-time.
}
	\label{fig:teaser}
\end{figure}

Unfortunately, the state-of-the-art multi-camera 3D pose estimation algorithms tend to be computationally expensive because they rely on deep networks that operate on volumetric grids~\cite{Iskakov19}, or volumetric Pictorial Structures~\cite{Qiu19, Pavlakos17}, to combine features coming from different views in accordance with epipolar geometry. Fig.~\ref{fig:teaser}(a) illustrates these approaches. 

In this paper, we demonstrate that the expense of using a 3D grid is not required. Fig.~\ref{fig:teaser}(b) depicts our approach.
We encode each input image into latent representations, which are then efficiently transformed from image coordinates into world coordinates by conditioning on the appropriate camera transformation using feature transform layers~\cite{Worrall17}. This yields feature maps that live in a \textit{canonical} frame of reference and are \textit{disentangled} from the camera poses. The feature maps are fused using 1D convolutions into a unified latent representation, denoted as $p _{\text{3D}}$ in Fig.~\ref{fig:teaser}(b), which makes it possible to reason jointly about the extracted 2D poses across camera views. We then condition this latent code on the known camera transformation to decode it back to 2D image locations using a shallow 2D CNN. The proposed fusion technique, to which we will refer to as \textit{Canonical Fusion}, enables us to drastically improve the accuracy of the 2D detection compared to the results obtained from each image independently, so much so, that we can lift these 2D detections to 3D reliably using the simple Direct Linear Transform (DLT) method~\cite{Hartley03}. Because standard DLT implementations that rely on Singular Value Decomposition (SVD) are rarely efficient on GPUs, we designed a faster alternative implementation based on the Shifted Iterations method \cite{Quarteroni10}.

In short, our contributions are: (1) a novel multi-camera fusion technique that exploits 3D geometry in latent space to efficiently and jointly reason about different views and drastically improve the accuracy of 2D detectors, (2) a new GPU-friendly implementation of the DLT method, which is hundreds of times faster than standard implementations. 

We evaluate our approach on two large-scale multi-view datasets, Human3.6M~\cite{Ionescu14, IonescuSminchisescu11} and TotalCapture~\cite{Trumble17}: we outperform the state-of-the-art methods when additional training data is not available, both in terms of speed and accuracy. When additional 2D annotations can be used ~\cite{Lin14, Andriluka14}, our accuracy remains comparable to that of the state-of-the-art methods, while being faster.
Finally, we demonstrate that our approach can handle viewpoints that were never seen during training.
In short, we can achieve real-time performance without sacrificing prediction accuracy nor viewpoint flexibility, while other approaches cannot.

\comment{
	
	\\pf{
		We evaluate our approach on two large-scale multi-view datasets, Human3.6M~\cite{Ionescu14} and TotalCapture~\cite{Trumble17}: We \pf{outperform} the state-of-the-art when additional training data training is not available, both in terms of speed and accuracy. When additional 2D annotations can be used for~\cite{Lin14, Andriluka14}, our accuracy remains comparable to that of state-of-the-art methods but we become much faster. Finally, unlike cube-based approaches, we can handle viewpoints that were never seen during training.  In short, we can achieve real-time performance and viewpoint flexibility without sacrificing prediction accuracy, while other approaches cannot. 
	}

Our architecture encodes separately each input image into 2D feature maps, which are then efficiently transformed from image coordinates to the world coordinates by conditioning on the appropriate inverse camera projection operator.
To this end we exploit feature transform layers \cite{Worrall17}, a recently proposed technique in interpretable representation learning.

This yields feature maps that live in a \textit{canonical} frame of reference, which are common to all images and are  \textit{disentangled} from the camera poses.
The resulting latent codes can then be fused using 2D convolutions into a unified latent representation, denoted as $\bz$ in the figure, in order to reason jointly about the extracted pose across different camera views. We can then decode it back to 2D image locations using a shallow CNN and the known projection operators. We will refer to this as \textit{Canonical Fusion}. The proposed technique enables us to drastically improve the accuracy of the 2D detection compared to the results obtained from each image independently, so much so, that we can lift these 2D detections to 3D reliably using the simple Direct Linear Transform (DLT) method~\cite{Hartley03}.

\PF{Did I get it correctly?} \ER{It's actually the real triangulation algo, implemented differentiably, and not a network. Also, Should we mention my new way of solving DLT in the contributions? it's pretty crucial to make the whole thing run fast.}

We evaluate our approach on two large-scale multi-view datasets, Human3.6M~\cite{Ionescu14} and TotalCapture~\cite{Trumble17}: we demonstrate state-of-the-art performance when using no additional data and accuracy comparable to the far-more computationally intensive methods when allowing for additional 2D annotations \cite{Lin14, Andriluka14}. In other words, we can achieve real-time performance at no cost in accuracy, while the other approaches cannot. 


Virtual Reality (VR) and Augmented Reality (AR) headsets are new low-compute platforms that are being rapidly adopted by consumers and businesses alike. Consequently, multi-camera setups are becoming more and more common. 
In this work, we focus on estimating the articulated 3D pose of a single performing human from multi-view input, designing an effective but lightweight approach that is suitable for low-compute mobile devices.

Prior work follows the pipeline of first estimating 2D poses from each view separately, and then recovering 3D pose from single-view detections in a post-processing step. 
This is done either via triangulation \cite{Amin13, Li19, Gunel19, Cao17}, fully connected layers \cite{Kadkhodamohammadi18, Martinez17} or 3D Pictorial Structures \cite{Burenius13, Pavlakos17}.
Unfortunately, the performance of the 3D lifting step is highly influenced by the quality of 2D detections, and, in practice, large errors occur in presence of occlusions in one of the views.

Recently, concurrent work has achieved great improvement by proposing to reason \textit{jointly} about 2D detections across multiple views through deep neural networks so that to correct for erroneous 2D detections. \cite{Qiu19} proposes to refine output 2D heatmaps via fully connected layers before projecting them on 3D volumes to recover absolute 3D pose via Pictorial Structures. 
Similarly, \cite{Iskakov19} projects heatmaps to a 3D volume differentiably and processes them using a volumetric deep neural network to produce root-aligned 3D pose. 
Although powerful, these methods rely on compute-intensive volumetric operations, and therefore cannot achieve real-time performance.

Specifically, our architecture encodes a set of multi-camera images to 2D feature maps, which are then efficiently transformed from image coordinates to world ones by conditioning on the respective inverse camera projection operators within a feature transform layer. 
This yields a set of features that now live in a common \textit{canonical} frame of reference and are disentangled from camera view-point, and that we fuse into a unified representation through a shallow convolutional block. 
Our latent representation of 3D pose is then conditioned to camera projection operators so to be mapped back to each view-point, where it can be finally decoded to a set of per-view 2D detections.

Although introducing little computational overhead with respect to monocular 2D pose estimation architectures, our novel   fusion technique, to which we refer to as  drastically improves the accuracy of 2D detections, allowing us to simply use the \textit{Direct Linear Transform} (DLT) method \cite{Hartley03} to lift 2D landmarks reliably to 3D.
In our experiments, we have found that the ``classical" implementation of the DLT method , relying on Singular Value Decomposition (SVD), does not exploit well the compute power of moder GPU architectures. To alleviate this, we design an alternative implementation, based on the Shifted Iterations method \cite{Quarteroni10}, which is orders of magnitude faster than its counterpart. 

}

\section{Related Work}
Pose estimation is a long-standing problem in the computer vision community. In this section, we review in detail related multi-view pose estimation literature. We then focus on approaches lifting 2D detections to 3D via triangulation.
\\
\\
\textbf{Pose estimation from multi-view input images.}
Early attempts \cite{Liu11, Gall10, Burenius13, Belgiannis14} tackled pose-estimation from multi-view inputs by optimizing simple parametric models of the human body to match hand-crafted image features in each view, achieving limited success outside of the controlled settings.
With the advent of deep learning, the dominant paradigm has shifted towards estimating 2D poses from each view separately, through exploiting efficient monocular pose estimation architectures \cite{Newell16,Tompson15,Wei16, Sun19}, and then recovering the 3D pose from single view detections.

Most approaches use 3D volumes to aggregate 2D predictions. Pavlakos et al. \cite{Pavlakos17} project 2D keypoint heatmaps to 3D grids and use Pictorial Structures aggregation to estimate 3D poses.  
Similarly, \cite{Qiu19} proposes to use Recurrent Pictorial Structures to efficiently refine 3D pose estimations step by step. 
Improving upon these approaches, \cite{Iskakov19} projects 2D heatmaps to a 3D volume using a differentiable model and regresses the estimated root-centered 3D pose through a learnable 3D convolutional neural network. 
This allows them to train their system end-to-end by optimizing directly the 3D metric of interest through the predictions of the 2D pose estimator network. 
Despite recovering 3D poses reliably, volumetric approaches are computationally demanding, and simple triangulation of 2D detections is still the de-facto standard when seeking real-time performance \cite{Li19, Cao17}.

Few models have focused on developing lightweight solutions to reason about multi-view inputs.  In particular, \cite{Kadkhodamohammadi18} proposes to concatenate together pre-computed 2D detections and pass them as input to a fully connected network to predict global 3D joint coordinates. Similarly, \cite{Qiu19} refines 2D heatmap detections jointly by using a fully connected layer before aggregating them on 3D volumes.
Although, similar to our proposed approach, these methods fuse information from different views without using volumetric grids, they do not leverage camera information and thus overfit to a specific camera setting. We will show that our approach can handle different cameras flexibly and even generalize to unseen ones.
\\
\\
\textbf{Triangulating 2D detections.}
Computing the position of a point in 3D-space given its images in $n$ views and the camera matrices of those views is one of the most studied computer vision problems. We refer the reader to \cite{Hartley03} for an overview of existing methods. 
In our work, we use the Direct Linear Triangulation (DLT) method because it is simple and differentiable. We propose a novel GPU-friendly implementation of this method, which is up to two orders of magnitude faster than existing ones that are based on SVD factorization. We provide a more detailed overview about this algorithm in Section \ref{sec:DLT}. 

Several methods lift 2D detections efficiently to 3D by means of triangulation \cite{Amin13, Li19, Gunel19, Cao17}.
More closely related to our work, \cite{Iskakov19}  proposes to back-propagate through an SVD-based differentiable triangulation layer by lifting 2D detections to 3D keypoints. Unlike our approach, these methods do not perform any explicit reasoning about multi-view inputs and therefore struggle with large self-occlusions. 

\begin{figure*}[h]
	\centering
	\vspace{-20pt}
	\includegraphics[width=\textwidth]{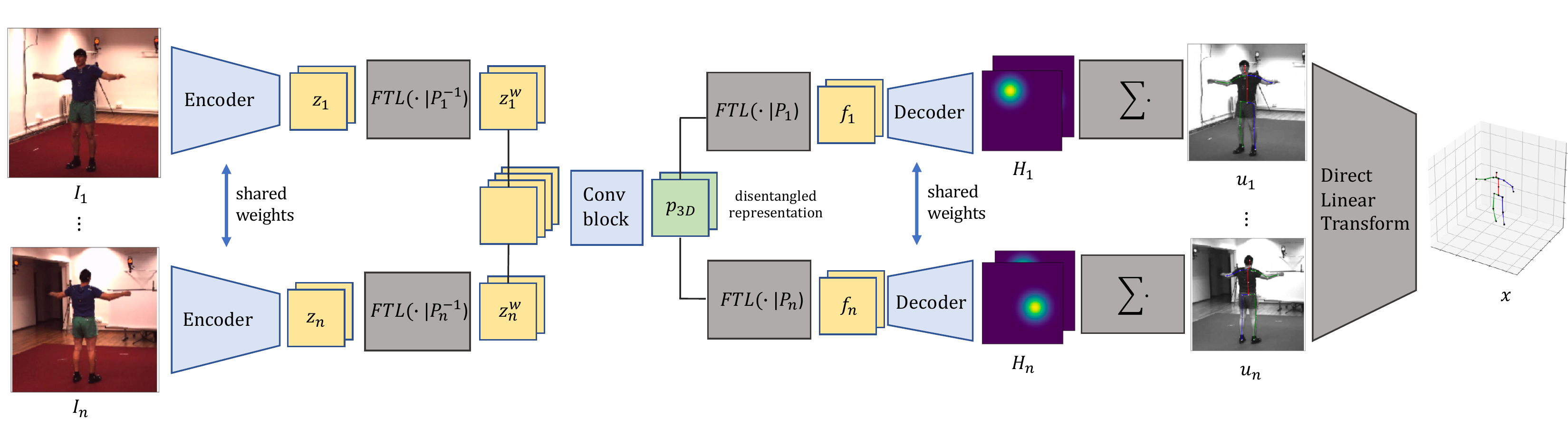}
	\vspace{-16pt}
	\caption{\textit{Canonical Fusion.} 
	The proposed architecture learns a unified view-independent representation of the 3D pose from multi-view inputs, allowing it to reason efficiently across multiple views.  Feature Transform Layers (FTL) use camera projection matrices ($P_i$) to map features between this canonical representation, while Direct Linear Transform (DLT) efficiently lifts 2D keypoints into 3D.  Blocks marked in gray are differentiable (supporting backpropagation) but not trainable.}
	\label{fig:arch}
\end{figure*}

\section{Method}
\label{sec:method}

We consider a setting in which $n$ spatially calibrated and temporally synchronized cameras capture the performance of a single individual in the scene.
We denote with $\{I_i\}_{i=1}^n$ the set of multi-view input images, each captured from a camera with known projection matrix $P_i$. 
Our goal is to estimate its 3D pose in the absolute world coordinates; we parameterize it as a fixed-size set of 3D point locations $\{\textbf x ^j\}_{j=1}^J$, which correspond to the joints.

Consider as an example the input images on the left of Figure \ref{fig:arch}. Although exhibiting different appearances, the frames share the same 3D pose information up to a perspective projection and view-dependent occlusions.
Building on this observation, we design our architecture (depicted in Figure \ref{fig:arch}), which learns a unified \textit{ view-independent representation} of 3D pose from multi-view input images.
This allows us to reason efficiently about occlusions to produce accurate 2D detections, that can be then simply lifted to 3D absolute coordinates by means of triangulation. 
Below, we first introduce baseline methods for pose estimation from multi-view inputs. We then describe our approach in detail and explain how we train our model.

\subsection{Lightweight pose estimation from multi-view inputs}

Given input images $\{I_i\}_{i=1}^n$, we use a convolutional neural network backbone to extract features $\{z_i\}_{i=1}^n$ from each input image separately. 
Denoting our encoder network as $e$, $z_i$ is computed as
\begin{align}
z_i = e (I_i).
\end{align}
Note that, at this stage, feature map $z_i$ contains a representation of the 3D pose of the performer that is fully \textit{entangled} with camera view-point, expressed by the camera projection operator $P_i$.

We first propose a baseline approach, similar to \cite{Li19, Gunel19}, to estimate the 3D pose from multi-view inputs. Here, we simply decode latent codes $z_i$ to 2D detections, and lift 2D detections to 3D by means of triangulation. We refer to this approach as \textit{Baseline}. Although efficient, we argue that this approach is limited because it processes each view independently and therefore cannot handle self-occlusions.

An intuitive way to jointly reason across different views is to use a learnable neural network to share information across embeddings $\{z_i\}_{i=1}^n$, by concatenating features from different views and processing them through convolutional layers into view-dependent features, similar in spirit to the recent models \cite{Kadkhodamohammadi18, Qiu19}. In Section \ref{sec:exp} we refer to this general approach as \textit{Fusion}. 
Although computationally lightweight and effective, we argue that this approach is limited for two reasons: (1) it does not make use of known camera information, relying on the network to learn the spatial configuration of the multi-view setting from the data itself, and (2) it cannot generalize to different camera settings by design. 
We will provide evidence for this in Section \ref{sec:exp} .

\subsection{Learning a view-independent representation}

To alleviate the aforementioned limitations, we propose a method to jointly reason across views, leveraging the observation that the 3D pose information contained in feature maps $\{z_i\}_{i=1}^n$ is the same across all $n$ views up to camera projective transforms and occlusions, as discussed above. 
We will refer to this approach as \textit{Canonical Fusion}.

To achieve this goal, we leverage \textit{feature transform layers} (FTL)  \cite{Worrall17}, which was originally proposed as a technique to condition latent embeddings on a target transformation so that to learn interpretable representations. 
Internally, a FTL has no learnable parameter and is computationally efficient. It simply reshapes the input feature map to a point-set, applies the target transformation, and then reshapes the point-set back to its original dimension.
This technique forces the learned latent feature space to preserve the structure of the transformation, resulting in practice in a disentanglement between the learned representation and the transformation.
In order to make this paper more self-contained, we review FTL in detail in the Supplementary Section. 

Several approaches have used FTL for novel view synthesis to map the latent representation of images or poses from one view to another \cite{Rhodin18, Rhodin19, Chen19b, Chen19a}. In this work, we leverage FTL to map images from multiple views to a unified latent representation of 3D pose.
In particular, we use FTL to project feature maps $z_i$ to a common canonical representation by explicitly conditioning them on the camera projection matrix $P_i ^{-1}$ that maps image coordinates to the world coordinates
\begin{align}
z_i ^ w= \text{FTL} (z_i |P_i ^{-1}).
\end{align}

Now that feature maps have been mapped to the same canonical representation, they can simply be concatenated and fused into a \textit{unified representation of 3D pose} via a shallow 1D convolutional neural network $f$, i.e. 
\begin{align}
p_{\text{3D}} = f (\text{concatenate}(\{z_i ^w \}_{i=1}^n ) ).
\end{align}

We now force the learned representation to be disentangled from camera view-point by transforming the shared $p_{\text{3D}}$ features to view-specific representations $f_i$ by
\begin{align}
f_i = \text{FTL} (p_{\text{3D}}| P_i ).
\end{align}
In Section \ref{sec:exp}  we show both qualitatively and quantitatively that the representation of 3D pose we learn is effectively disentangled from the camera-view point.
\begin{algorithm}[t]
	\label{alg:algo}
	\SetAlgoLined
	$A \gets A(\{\textbf u_i, P_i \} _{i=1} ^ N)$\;
	$B \gets (A^TA + \sigma I) ^{-1}$\;
	$\sigma \gets 0.001$ (see  Theorem \ref{thm:sing})\;
	$ \textbf  x \gets \text{rand}(4,1) $\;
	\For{$i = 1:T$}
	{
		$\textbf x \gets B \textbf x $\;
		$\textbf x \gets \textbf x / \|\textbf x\|$\;
	}
      \textbf{return} $\textbf y \gets \textbf x(0:3)/ \textbf x(4)$\;
	\caption{DLT-SII$(\{\textbf u_i, P_i \} _{i=1} ^ N, T=2)$}
\end{algorithm}

Unlike the \textit{Fusion} baseline, \textit{Canonical Fusion} makes explicit use of camera projection operators to simplify the task of jointly reasoning about views. The convolutional block, in fact, now does not have to figure out the geometrical disposition of the multi-camera setting and can solely focus on reasoning about occlusion. 
Moreover, as we will show, \textit{Canonical Fusion} can handle different cameras flexibly, and even generalize to unseen ones.

\subsection{Decoding latent codes to 2D detections}
This component of our architecture proceeds as a monocular pose estimation model that maps view-specific representations $f_i$ to 2D Heatmaps $H_i$ via a shallow convolutional decoder $d$, i.e.
\begin{align}
H_i ^ j = d(f_i ),
\end{align}
where $H_i ^ j$ is the heatmap prediction for joint $j$ in Image $i$.
Finally, we compute the 2D location $u _i ^ j$ of each joint $j$ by simply integrating heatmaps across spatial axes
\begin{align}
\textbf u _i ^ j= \Big(\sum_{x,y}  x H_i ^j , \sum_{x,y}  y H_i ^j  \Big)  / \sum_{x,y}  H_i^j.        
\end{align}
Note that this operation is differentiable with respect to heatmap $H_i ^j$, allowing us to back-propagate through it.
In the next section, we explain in detail how we proceed to lift multi-view 2D detections to 3D.

\begin{figure}[t]
	\centering
	\begin{overpic}[clip,  trim=0.5cm 0.5cm 0.0cm 0cm,width=0.5\textwidth]{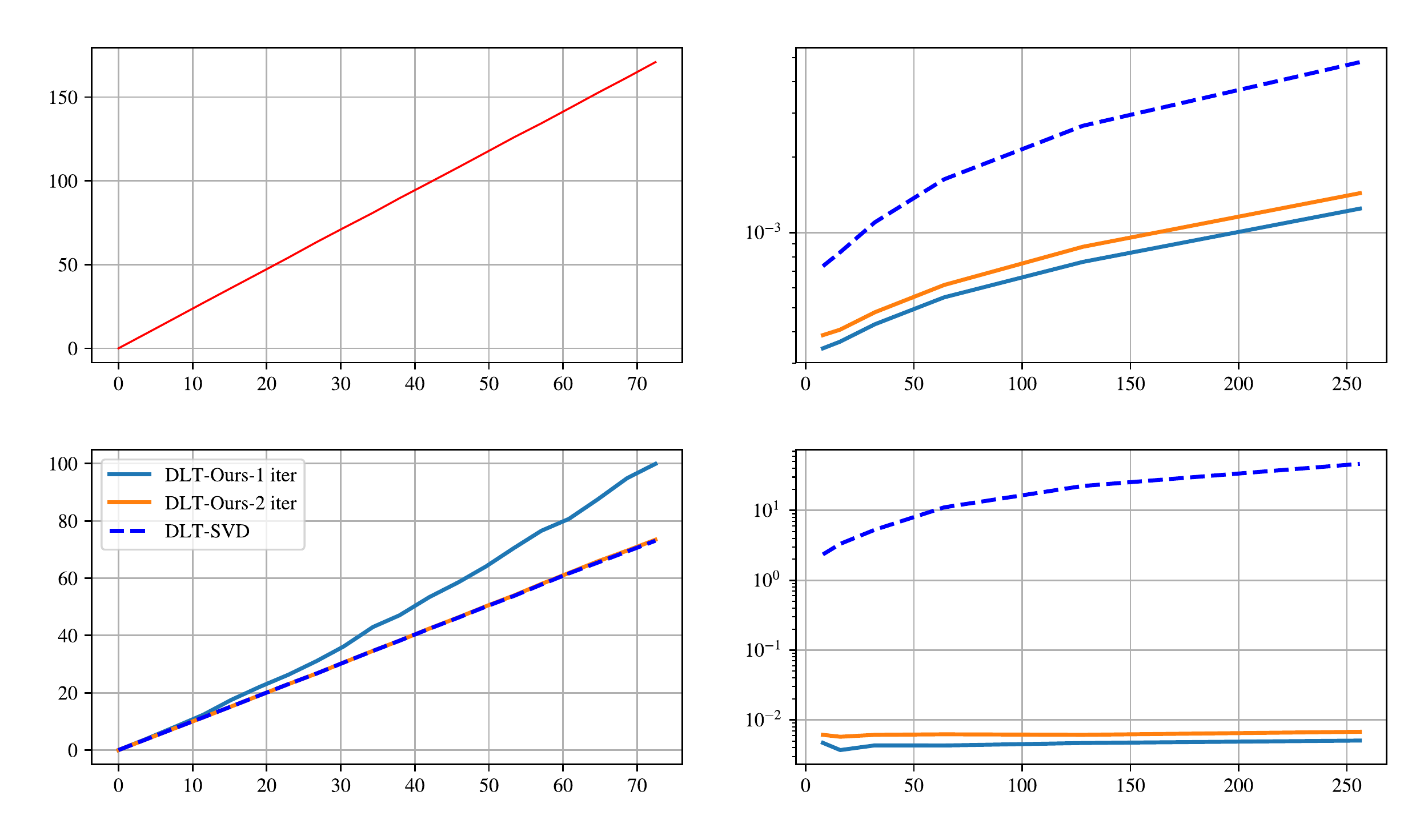}
		
		\put(17,56){ \footnotesize{a) Theorem \ref{thm:sing} }}
		\put(65.5,56) {\footnotesize{c) CPU Profiling }}
	\put(19,26.7) {\footnotesize{b) Accuracy }}
	\put(64,26.7){ \footnotesize{ d) GPU Profiling  }}
	
		\put(19,-2.0) {\footnotesize{2D-MPJPE }}
		\put(71,-2.0) {\footnotesize{batch size}}
		
		\put (-2,35.5) {\footnotesize{\rotatebox{90}{$\mathbb E [\sigma_{\text{min}}(A^*)]$}}}
		\put (-2,6.5) {\footnotesize{\rotatebox{90}{\footnotesize{3D-MPJPE }}}}
			\put (49,10.5) {\footnotesize{\rotatebox{90}{\footnotesize{time(s)  }}}}
				\put (49,38.5) {\footnotesize{\rotatebox{90}{\footnotesize{time(s) }}}}

	\end{overpic}

	\vspace{8pt}
	\caption{Evaluation of DLT. We validate the findings of Theorem \ref{thm:sing} in (a). We then compare our proposed DLT implementation to the SVD of \cite{Iskakov19}, both in terms of accuracy (b) and performance (c),(d). Exploiting Theorem 1, we can choose a suitable approximation for $\sigma_{\text{min}}(A^*)$, and make DLT-SII converge to the desired solution in only two iterations. }
	\label{fig:dlt}
\end{figure}

\subsection{Efficient Direct Linear Transformation}
\label{sec:DLT}

In this section we focus on finding the position $\textbf x^j=[x^j,y^j,z^j]^T$ of a 3D point in space given a set of $n$ 2d detections $\{\textbf u_i^j\}_{i=1}^n$. To ease the notation, we will drop apex $j$ as the derivations that follow are carried independently for each landmark.

Assuming a pinhole camera model, we can write $d_i \textbf u_i = P_i \textbf x$, where $d_i$ is an unknown scale factor. 
Note that here, with a slight abuse of notation, we express both 2d detections $\textbf u_i$ and 3d landmarks $\textbf x$  in homogeneous coordinates. 
Expanding on the components we get
\begin{align}
\label{eq:pinhole}
d_i u_i = p_i ^ {1T} \textbf x \, , \, d_i v_i = p_i ^ {2T} \textbf x \, , \, d_i = p_i ^ {3T} \textbf x ,
\end{align}
where $p_i ^ {kT}$ denotes the $k$-th row of $i$-th camera projection matrix.
Eliminating $d_i$ using the third relation in (\ref{eq:pinhole}), we obtain 
\begin{align}
(u_i p_i ^ {3T} - p_i ^ {1T}) &\textbf x = 0 \label{eq:dlt_x}\\
(v_i p_i ^ {3T} - p_i ^ {2T}) &\textbf x = 0. \label{eq:dlt_y}
\end{align}
Finally, accumulating over all available $n$ views yields a total of $2n$ linear equations in the unknown 3D position $ \textbf x $, which we write compactly as 
\begin{align}
A\textbf x = \textbf 0,
 \, \,  \, \, \text{where} \, \,A= A(\{ u_i, v_i, P_i\} _{i=1}^N) .
\label{eq:dlt}
\end{align}
Note that $A\in \mathbb R ^{2n \times 4}$ is a function of $\{ u_i, v_i, P_i\} _{i=1}^N$, as specified in Equations (\ref{eq:dlt_x}) and (\ref{eq:dlt_y}). We refer to $A$ as the DLT matrix.
These equations define $\textbf x$ up to a scale factor, and we seek a non-zero solution.
In the absence of noise, Equation (\ref{eq:dlt}) admits a unique non-trivial solution, corresponding to the 3D intersection of the camera rays passing by each 2D observation $\textbf u_i$ (i.e. matrix $A$ does not have full rank).
However, considering noisy 2D point observations such as the ones predicted by a neural network, Equation (\ref{eq:dlt}) does not admit solutions, thus we have to seek for an approximate one. 
A common choice, known as the \textit{Direct Linear Transform} (DLT) method \cite{Hartley03}, proposes the following relaxed version of Equation (\ref{eq:dlt}):
\begin{align}
\text{min}_{\textbf x} \| A\textbf x \| , \text{subject to} \|\textbf x \| = 1.
\label{eq:dlt_rel}
\end{align}
Clearly, the solution to the above optimization problem is the eigenvector of $A^TA$ associated to its smallest eigenvalue $\lambda _{\text{min}}(A^TA)$. 
In practice, the eigenvector is computed by means of Singular Value Decomposition (SVD) \cite{Hartley03}. We argue that this approach is suboptimal, as we in fact only care about \textit{one} of the eigenvectors of $A^TA$.

Inspired by the observation above that the smallest eigenvalue of $A^TA$ is zero for non-noisy observations, we derive a bound for the smallest eigenvalue of matrix $A^TA$ in the presence of Gaussian noise.  We prove this estimate in the Supplementary Section.\\
\begin{theorem}
	\label{thm:sing}
	Let $A$ be the DLT matrix associated to the non-perturbed case, i.e. $\sigma_{\text{min}}(A)  = 0$. 
	Let us assume i.i.d Gaussian noise $\varepsilon =(\varepsilon_u,\varepsilon_v)\sim \mathcal N(0,s^2I)$ in our 2d observations, i.e. $(u^*, v^*) = (u + \varepsilon _u, v + \varepsilon _v)$, and let us denote as $A^*$ the DLT matrix associated  to the perturbed system. 
	Then, it follows that:
	\begin{align}
	0 \leq \mathbb E [\sigma_{\text{min}}(A^*)] \leq  C s, \, \, \text{where } C=C(\{u_i, P_i \}_{i=1}^N)
	\end{align}
\end{theorem}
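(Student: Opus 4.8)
The plan is to exploit the additive structure of the noise together with the variational characterization $\sigma_{\text{min}}(M)=\min_{\|\textbf x\|=1}\|M\textbf x\|$, rather than any perturbation-theoretic eigenvalue expansion (which would introduce unwanted dependence on spectral gaps). First I would write the perturbed DLT matrix as $A^*=A+E$ and identify $E$ explicitly. Inspecting Equations (\ref{eq:dlt_x})--(\ref{eq:dlt_y}), replacing $(u_i,v_i)$ by $(u_i+\varepsilon_{u,i},v_i+\varepsilon_{v,i})$ changes the two rows contributed by camera $i$ from $(u_i p_i^{3T}-p_i^{1T})$ and $(v_i p_i^{3T}-p_i^{2T})$ to those same rows plus $\varepsilon_{u,i}p_i^{3T}$ and $\varepsilon_{v,i}p_i^{3T}$ respectively. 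Hence $E$ is the $2n\times4$ matrix whose $i$-th pair of rows is $(\varepsilon_{u,i}p_i^{3T},\varepsilon_{v,i}p_i^{3T})$; it is linear in the noise and otherwise deterministic.

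Next I would use the hypothesis $\sigma_{\text{min}}(A)=0$: the unperturbed system has a unit null vector $\textbf x_0$ (the true 3D point expressed in homogeneous coordinates, rescaled to unit Euclidean norm in $\mathbb R^4$), with $A\textbf x_0=\textbf 0$. Plugging $\textbf x_0$ as a feasible point into the variational problem for $A^*$ gives the deterministic inequality $\sigma_{\text{min}}(A^*)\le\|A^*\textbf x_0\|=\|(A+E)\textbf x_0\|=\|E\textbf x_0\|$. The left inequality $\sigma_{\text{min}}(A^*)\ge0$ is immediate since singular values are nonnegative, so only the upper bound requires work.

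It then remains to bound $\mathbb E\|E\textbf x_0\|$. I would compute $\|E\textbf x_0\|^2$ in closed form: the entry of $E\textbf x_0$ coming from the $u$-row of camera $i$ is $\varepsilon_{u,i}(p_i^{3T}\textbf x_0)=\varepsilon_{u,i}\,d_i$, where $d_i=p_i^{3T}\textbf x_0$ is the depth of the true point in camera $i$ (cf. the third relation in (\ref{eq:pinhole})), and similarly for the $v$-row. Thus $\|E\textbf x_0\|^2=\sum_{i=1}^n(\varepsilon_{u,i}^2+\varepsilon_{v,i}^2)\,d_i^2$. Taking expectations with $\mathbb E[\varepsilon_{u,i}^2]=\mathbb E[\varepsilon_{v,i}^2]=s^2$ yields $\mathbb E\|E\textbf x_0\|^2=2s^2\sum_{i=1}^n d_i^2$, and Jensen's inequality (concavity of $\sqrt{\cdot}$) gives $\mathbb E[\sigma_{\text{min}}(A^*)]\le\mathbb E\|E\textbf x_0\|\le\sqrt{\mathbb E\|E\textbf x_0\|^2}=s\sqrt{2\sum_{i=1}^n d_i^2}$. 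Hence $C=\sqrt{2\sum_{i=1}^n d_i^2}$, a quantity depending only on the cameras $P_i$ and the noiseless detections $\textbf u_i$ (which fix $\textbf x_0$ and therefore the depths), and not on $s$.

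The main obstacle is conceptual, not computational: one must recognize that the smallest singular value should be controlled by probing the \emph{known} null direction of the unperturbed matrix, after which the argument collapses to an exact second-moment calculation and a single use of Jensen. The only technical care needed is bookkeeping of the homogeneous-coordinate normalization, so that $\textbf x_0$ has unit norm in $\mathbb R^4$ and the depths $d_i$ entering $C$ are measured consistently with that choice.
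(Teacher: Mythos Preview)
Your proof is correct and reaches the conclusion by a genuinely different route than the paper. The paper invokes the Weyl-type perturbation bound $|\sigma_{\min}(A^*)-\sigma_{\min}(A)|\le\|A^*-A\|_2$, factors the perturbation as $\Sigma P$ with $\Sigma$ diagonal in the noise, bounds $\|\Sigma P\|_2\le\|P\|_2\max_i|\varepsilon_i|$, replaces the maximum by the sum, and finally takes expectations using the half-normal mean to obtain $C=2n\sqrt{2/\pi}\,\|P\|_2$. You instead exploit the variational form $\sigma_{\min}(A^*)=\min_{\|\textbf x\|=1}\|A^*\textbf x\|$ and probe it with the known null vector $\textbf x_0$ of the unperturbed $A$, after which an exact second-moment computation and a single use of Jensen give $C=\sqrt{2\sum_i d_i^2}$ with $d_i=p_i^{3T}\textbf x_0$.

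Your argument is more elementary (no appeal to perturbation theory) and yields a sharper constant: you avoid both the submultiplicativity step and the wasteful $\max\le\sum$ bound, and your $C$ scales like $\sqrt n$ rather than $n$. The paper's constant has the cosmetic advantage of depending only on the camera matrices through $\|P\|_2$, whereas yours depends on the (renormalised) depths $d_i$ and hence also on the noiseless detections; but since the theorem statement allows $C=C(\{u_i,P_i\})$, this is entirely within spec.
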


In Figure \ref{fig:dlt}(a) we reproduce these setting by considering Gaussian perturbations of 2D observations, and find an experimental confirmation that by having a greater 2D joint measurement error, specified by 2D-MPJPE (see Equation \ref{eq:loss2d} for its formal definition), the expected smallest singular value $\sigma_{\text{min}}(A^*)$ increases linearly.

The bound above, in practice, allows us to compute the smallest singular vector of $A^{*}$ reliably by means of \textit{Shifted Inverse Iterations} (SII) \cite{Quarteroni10}: we can estimate $\sigma_{\text{min}}(A^*)$ with a small constant and know that the iterations will converge to the correct eigenvector. 
For more insight on why this is the case, we refer the reader to the Supplementary Section.

SII can be implemented extremely efficiently on GPUs. As outlined in Algorithm \ref{alg:algo}, it consists of one  inversion of a $4\times 4$ matrix and several matrix multiplication and vector normalizations, operations that can be trivially parallelized. 
In Figure \ref{fig:dlt}(b) we compare our SII based implementation of DLT (estimating the smallest singular value of $A$ with $\sigma = 0.001$) to an SVD  based one, such as the one proposed in \cite{Iskakov19}. For 2D observation errors up to $70$ pixels (which is a reasonable range in $256$ pixel images), our formulation requires as little as two iterations to achieve the same accuracy as a full SVD factorization, while being respectively $10/100$ times faster on CPU/GPU than its counterpart, as evidenced by our profiling in Figures \ref{fig:dlt}(c,d).

\subsection{Loss function}

In this section, we explain how to train our model. Since our DLT implementation is differentiable with respect to 2D joint locations $\textbf u_i$, we can let gradients with respect to 3D landmarks $\textbf x$ flow all the way back to the input images $\{I_i\}_{i=1}^n$, making our approach trainable end-to-end.
However, in practice, to make training more stable in its early stages, we found it helpful to first train our model by minimizing a 2D Mean Per Joint Position Error (MPJPE) of the form
\vspace{-6pt}
\begin{align}
\label{eq:loss2d}
L_{ \text{2D-MPJPE} } =  \sum_{i=1}^n  \frac{1}{J} \sum_{j=1} ^ J \| \mathbf u_i ^j -  \mathbf {\hat u} _i ^j \|_2,
\end{align}
where $\mathbf {\hat u}_j ^ i $ denotes the ground truth 2D position of $j$-th joint in the $i$-th image.
In our experiments, we pre-train our models by minimizing $L_{ \text{2D-MPJPE} } $ for 20 epochs.
Then, we fine-tune our model by minimizing 3D MPJPE, which is also our test metric, by
\vspace{-6pt}
\begin{align}
L_{ \text{3D-MPJPE} } =  \frac{1}{J} \sum_{j=1} ^ J \| \mathbf x _j -  \mathbf {\hat x}_j \|_2,
\end{align}
where $\mathbf {\hat x}_j $ denotes the ground truth 3D position of $j$-th joint in the world coordinate.
We evaluate the benefits of fine-tuning using $L_{ \text{3D-MPJPE} } $ in the Section \ref{sec:exp}.

\section{Experiments}
\label{sec:exp} 
We conduct our evaluation on two available large-scale multi-view datasets, TotalCapture \cite{Trumble17}  and Human3.6M \cite{Ionescu14}.
We crop each input image around the performer, using ground truth bounding boxes provided by each dataset.
Input crops are undistorted, re-sampled so that virtual cameras are pointing at the center of the crop and normalized to $256\times256$.
We augment our train set by performing random
rotation($\pm 30$ degrees, note that image rotations correspond to camera rotations along the z-axis) and standard color augmentation.
In our experiments, we use a ResNet152 \cite{He16} pre-trained on ImageNet \cite{Deng09} as the backbone architecture for our encoder. Our fusion block consists of two $1\times1$ convolutional layers. Our decoder consists of 4 transposed convolutional layers, followed by a $1\times1$ convolution to produce heatmaps. More details on our architecture are provided in the Supplementary section.
The networks are trained for $50$ epochs, using a Stochastic Gradient Descent optimizer where we set learning rate to $2.5\times 10^{-2}$.

\begin{figure}[t]
	\centering
	\hspace{-14pt}
	\begin{overpic}[clip, trim=0.0cm 0cm 0cm 0cm,width=0.5\textwidth]{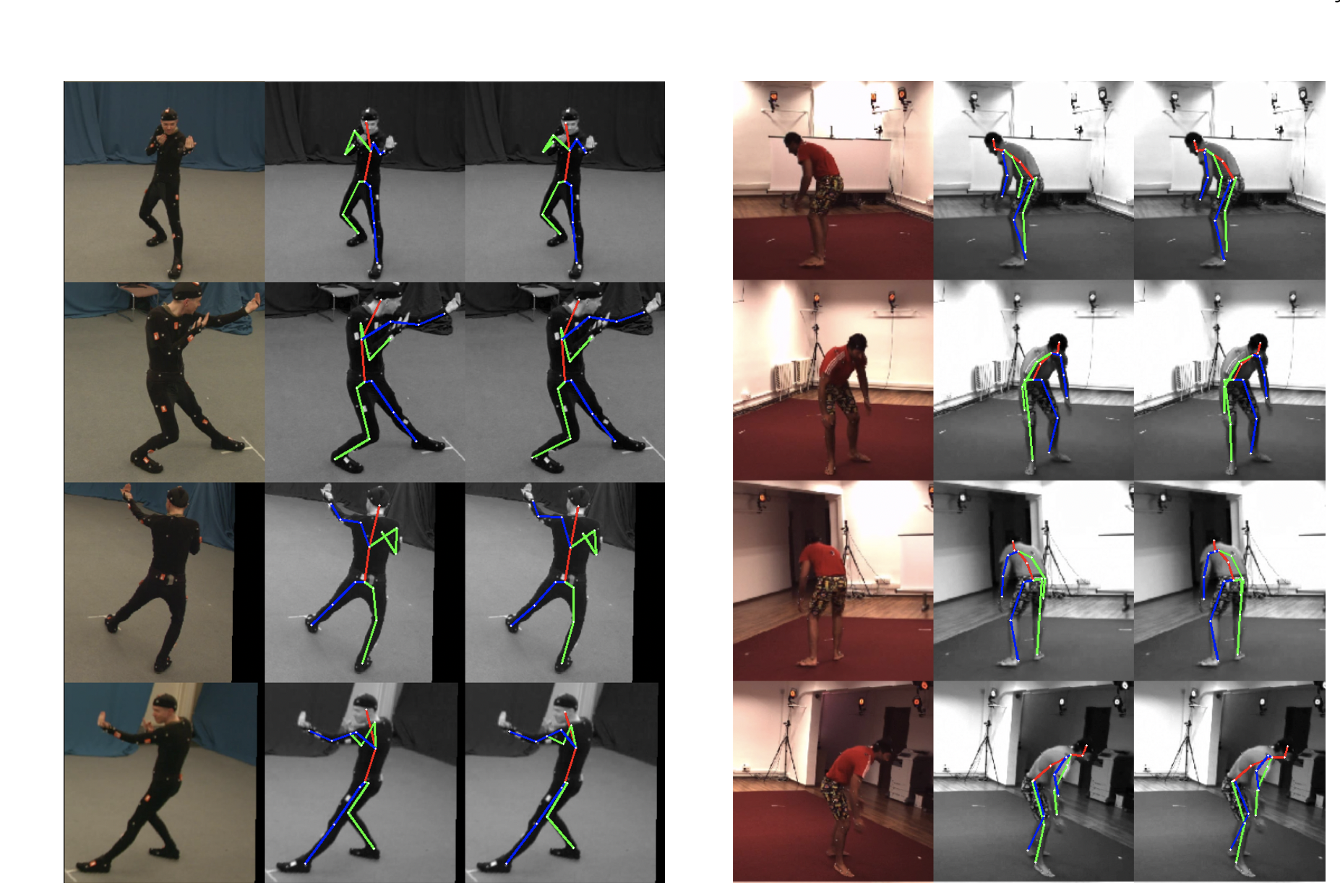}
		\put (15.5,68) {\small{a) \textbf{Total Capture}}}
		\put (12,62) {\small{$I_i$}}
		\put (26,62) {\small{$\mathbf {\hat u} _i$}}
		\put (40,62) {\small{$\mathbf {u} _i$}}
		
		\put (63.5,68) {\small{b) \textbf{Human3.6M}}}
		\put (62,62) {\small{$I_i$}}
		\put (74,62) {\small{$\mathbf {\hat u} _i$}}
		\put (89,62) {\small{$\mathbf {u} _i$}}
	
		\put (1,1.5) {\footnotesize{\rotatebox{90}{camera 7}}}
		\put (1,16.5) {\footnotesize{\rotatebox{90}{camera 5}}}
		\put (1,31.5) {\footnotesize{\rotatebox{90}{camera 3}}}
		\put (1,47.5) {\footnotesize{\rotatebox{90}{camera 1}}}

		\put (51,1.5) {\footnotesize{\rotatebox{90}{camera 4}}}
		\put (51,16.5) {\footnotesize{\rotatebox{90}{camera 3}}}
		\put (51,31.5) {\footnotesize{\rotatebox{90}{camera 2}}}
		\put (51,47.5) {\footnotesize{\rotatebox{90}{camera 1}}}
	
	\end{overpic}
	\caption{We visualize randomly picked samples from the test set of TotalCapture and Human3.6M.
		To stress that the pose representation learned by our network is effectively \textit{disentangled} from the camera view-point, we intentionally show predictions \textit{before} triangulating them, rather than re-projecting triangulated keypoints
to the image space. Predictions are best seen in supplementary videos.}
	\label{fig:vis}
\end{figure}

\subsection{Datasets specifications}
\textbf{TotalCapture}: The TotalCapture dataset \cite{Trumble17} has been recently introduced to the community.
It consists of 1.9 million frames, captured from 8 calibrated full HD video cameras recording at 60Hz. 
It features 4 male and 1 female subjects, each performing five diverse performances repeated 3 times: ROM, Walking, Acting, Running, and Freestyle.
Accurate 3D human joint locations are obtained from a marker-based motion capture system. 
Following previous work \cite{Trumble17}, the training set consists of “ROM1,2,3”,
“Walking1,3”, “Freestyle1,2”, “Acting1,2”, “Running1” on
subjects 1,2 and 3. The testing set consists of “Walking2 (W2)”, “Freestyle3 (FS3)”, and “Acting3 (A3)” on subjects 1, 2, 3, 4, and 5. The number following each action indicates the video of that action being used, for example Freestyle has three videos of the same action of which 1 and 2 are used for training and 3 for testing.  This setup allows for testing on unseen and seen subjects but always unseen performances.
Following \cite{Qiu19}, we use the data of four cameras (1,3,5,7) to train and test our models.
However, to illustrate the generalization ability of our approach to new camera settings, we propose an experiment were we train on cameras (1,3,5,7) and test on \textit{unseen} cameras (2,4,6,8).
\\
\\
\textbf{Human 3.6M}:  The Human3.6M dataset \cite{Ionescu14} is the largest publicly available 3D human pose estimation benchmark. 
It consists of 3.6 million frames, captured from 4 synchronized 50Hz digital cameras.  
Accurate 3D human joint locations are obtained from a marker-based motion capture system utilizing 10 additional IR sensors. 
It contains a total of 11 subjects (5 females and 6 males) performing 15 different activities. 
For evaluation, we follow the most popular protocol, by training on subjects 1, 5, 6, 7, 8 and using unseen subjects 9, 11 for testing. Similar to other methods \cite{Martinez17, Pavlakos17, Tome18, Kadkhodamohammadi18, Qiu19}, we use all available views during training and inference.

\begin{figure}[t]
	\vspace{-0.5cm}
	\centering
	\begin{overpic}[clip, trim=0.0cm 0cm 0cm 1.5cm,width=0.5\textwidth]{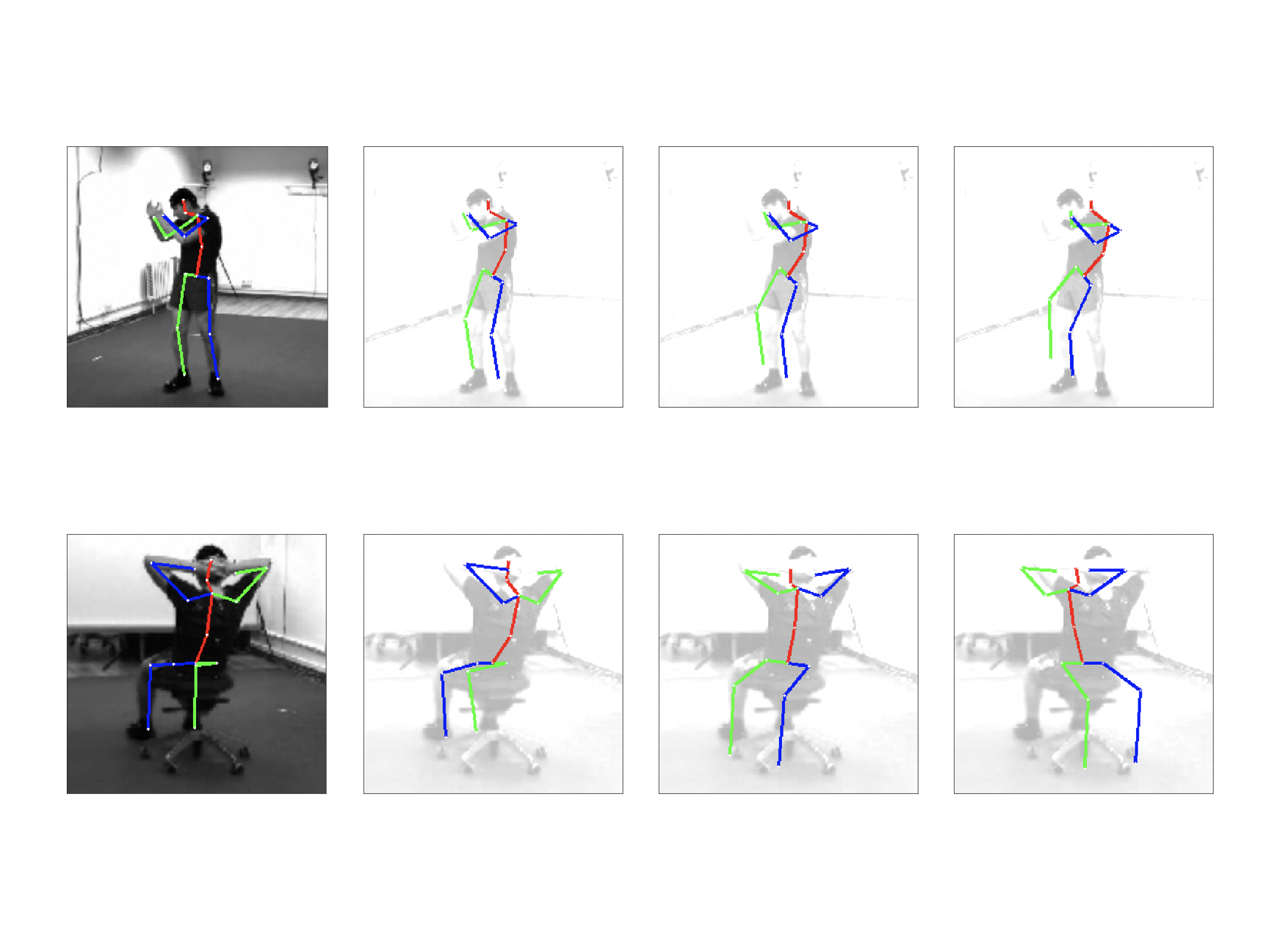}
		\put (25,62) {\small{a) In-plane rotations (seen views)}}
		\put (8,37) {\small{$R_z = 0^{\circ}$}}
		\put (31.3,37) {\small{$R_z = 10^{\circ}$}}
		\put (54,37) {\small{$R_z = 20^{\circ}$}}
		\put (77,37) {\small{$R_z = 30^{\circ}$}}
		
		\put (21.5,32) {\small{b) Out-of-plane rotations (unseen views)}}
		\put (8,6.5) {\small{${\phi} = 0^{\circ}$}}
		\put (32,6.5) {\small{${\phi} = 30^{\circ}$}}
		\put (54,6.5) {\small{${\phi} = 150^{\circ}$}}
		\put (77,6.5) {\small{${\phi} = 180^{\circ}$}}
	\end{overpic}
\vspace{-20pt}
	\caption{In the top row, we synthesize 2D poses after rotating cameras with respect to z-axis. In the bottom row, we rotate camera around the plane going through two consecutive camera views by angle $\phi$, presenting the network with \textit{unseen} camera projection matrices. Note that after decoding $p_{\text{3D}}$ to a novel view, it no longer corresponds to the encoded view. 2D Skeletons are overlaid on one of the original view in order to provide a reference. These images show that the 3D pose embedding $p_{\text{3D}}$ is \textit{disentangled} from the camera view-point. Best seen in supplementary videos.}
	\label{fig:dise}
\end{figure}

\begin{table*}[t]
	\begin{center}
			\scalebox{0.9}{
		\begin{tabular}{c|ccc|ccc|c}
			\hline
			Methods & \multicolumn{3}{c|}{Seen Subjects (S1,S2,S3)} & \multicolumn{3}{c|}{Unseen Subjects (S4,S5)} & Mean\\
			& Walking & Freestyle & Acting & Walking & Freestyle & Acting & \\
			\hline
			Qui \etal \cite{Qiu19} Baseline + RPSM &28 &42 &30 &45 &74 &46 &41 \\
			Qui \etal \cite{Qiu19} Fusion + RPSM &19 & \textbf{28} & 21 & 32 & \textbf{54} & \textbf{33} & 29 \\
			\hline
			Ours, Baseline & 31.8 & 36.4 & 24.0 & 43.0 & 75.7 & 43.0 & 39.3  \\
			Ours, Fusion & 14.6 & 35.3 & 20.7 & 28.8 & 71.8 & 37.3 & 31.8  \\
			Ours, Canonical Fusion(no DLT) & 10.9 & 32.2 & 16.7 & 27.6 & 67.9 & 35.1 & 28.6  \\
			Ours, Canonical Fusion& \textbf{10.6} & 30.4 & \textbf{16.3} & \textbf{27.0} & 65.0 & 34.2 & \textbf{27.5} \\
			\hline
		\end{tabular}}
	\end{center}
	\caption{3D pose estimation error MPJPE (mm) on the TotalCapture dataset. The results reported for our methods are obtained without rigid alignment or further offline post-processing.}
	\label{tab:total_cap}
\end{table*}

\begin{table*}[h]
	\begin{center}
		\begin{tabular}{c|ccc|ccc|c }
			\hline
			Methods & \multicolumn{3}{c|}{Seen Subjects (S1,S2,S3)} & \multicolumn{3}{c|}{Unseen Subjects (S4,S5)} & Mean  \\
			& Walking & Freestyle & Acting & Walking & Freestyle & Acting & \\
			\hline
			Ours, Baseline & 28.9 & 53.7 & 42.4 & 46.7 & 75.9 & 51.3 & 48.2  \\
			Ours, Fusion & 73.9 & 71.5 & 71.5 & 72.0 & 108.4 & 58.4 & 78.9 \\
			Ours, Canonical Fusion & \textbf{22.4} & \textbf{47.1} & \textbf{27.8} & \textbf{39.1} & \textbf{75.7} & \textbf{43.1} & \textbf{38.2} \\
			\hline
		\end{tabular}
	\end{center}
	\caption{Testing the generalization capabilities of our approach on unseen views.
		We take the networks of Section \ref{sec:ablate}, trained on cameras (1,3,5,7) of the TotalCapture training set, and test on the unseen views captured with cameras (2,4,6,8). We report 3D pose estimation error MPJPE (mm).}
	\label{tab:total_cap_unseen}
\end{table*}

\subsection{Qualitative evaluation of disentanglement}

We evaluate the quality of our latent representation by showing that 3D pose information is effectively disentangled from the camera view-point. 
Recall from Section \ref{sec:method} that our encoder $e$ encodes input images to latent codes $z_i$, which are transformed from camera coordinates to the world coordinates and latter fused into a unified representation $p_{\text{3D}}$ which is meant to be disentangled from the camera view-point.
To verify this is indeed the case, we propose to decode our representation to different 2D poses by using different camera transformations $P$, in order to produce views of the same pose from novel camera view-points.
We refer the reader to Figure \ref{fig:dise} for a visualization of the synthesized poses.  In the top row, we rotate one of the cameras with respect to the z-axis, presenting the network with projection operators that have been seen at train time. 
In the bottom row we consider a more challenging scenario, where we synthesize novel views by rotating the camera around the plane going through two consecutive camera views.
Despite presenting the network with unseen projection operators, our decoder is still able to synthesize correct 2D poses.
This experiment shows our approach has effectively learned a representation of the 3D pose that is disentangled from camera view-point. We evaluate it quantitatively in Section \ref{sec:eval_unseen}.
\begin{table*}[h]
	\begin{center}
		\scalebox{0.7}{
			\begin{tabular}{c|ccccccccccccccc|c}
				\hline
				Methods & Dir. & Disc. & Eat & Greet & Phone & Photo & Pose & Purch. &  Sit & SitD. & Smoke & Wait & WalkD. & Walk & WalkT. & Mean \\
				\hline
				Martinez \etal \cite{Martinez17} & 46.5 & 48.6 &54.0 &51.5 &67.5 &70.7 &48.5 &49.1 &69.8 &79.4 &57.8 &53.1 &56.7 &42.2 &45.4 &57.0   \\
				Pavlakos \etal \cite{Pavlakos17} &41.2 &49.2 &42.8 &43.4 &55.6 &46.9 &40.3 &63.7 &97.6 &119.0 &52.1 &42.7 &51.9 &41.8 &39.4 & 56.9  \\
				Tome \etal \cite{Tome18} &43.3& 49.6& 42.0 &48.8 &51.1 &64.3 &40.3 &43.3 &66.0 &95.2 &50.2 &52.2 &51.1 &43.9 &45.3 &  52.8 \\
				Kadkhodamohammadi \etal \cite{Kadkhodamohammadi18} & 39.4 &46.9 &41.0 &42.7 &53.6 &54.8& 41.4 &50.0 &59.9 &78.8 &49.8 &46.2 & 51.1 &40.5 &41.0 &49.1 \\
				Qiu \etal \cite{Qiu19}  & 34.8 &35.8&32.7 &33.5 &34.5 &38.2 &29.7 &60.7 & 53.1 &35.2 &41.0 &41.6 &31.9 &31.4 & 34.6 &38.3  \\
				Qui \etal \cite{Qiu19}  + RPSM &28.9 &32.5 &26.6 &28.1 &\textbf{28.3} &\textbf{29.3} &\textbf{28.0} &36.8  &41.0 &\textbf{30.5} &35.6 &30.0 &28.3 &\textbf{30.0} &30.5 &31.2 \\
				\hline
				Ours, Baseline & 39.1 & 46.5 & 31.6 & 40.9 & 39.3 & 45.5 & 47.3 & 44.6  & 45.6 & 37.1 & 42.4 & 46.7 & 34.5 & 45.2 & 64.8 & 43.2  \\ 
				Ours, Fusion & 31.3 & 37.3 & 29.4 & 29.5 & 34.6 & 46.5 & 30.2 & 43.5  & 44.2 & 32.4 & 35.7 & 33.4 & 31.0 & 38.3 & 32.4 & 35.4  \\
				Ours, Canonical Fusion (no DLT) & 31.0 & 35.1 & 28.6 & 29.2 & 32.2 & 34.8 & 33.4 & 32.1  & 35.8 & 34.8 & 33.3 & 32.2 & 29.9 & 35.1 & 34.8 & 32.5  \\
				Ours, Canonical Fusion & \textbf{27.3} & \textbf{32.1} & \textbf{25.0} & \textbf{26.5} & 29.3 & 35.4 & 28.8 & \textbf{31.6}  & \textbf{36.4} & 31.7 & \textbf{31.2} & \textbf{29.9} &  \textbf{26.9}& 33.7 & \textbf{30.4} &  \textbf{30.2}  \\
				\hline
		\end{tabular}}
	\end{center}
	\caption{No additional training data setup. We compare the 3D pose estimation error  (reported in MPJPE (mm)) of our method to the state-of-the-art approaches on the Human3.6M dataset. The reported results for our methods are obtained without rigid alignment or further offline post-processing steps.}
	\label{tab:h36}
\end{table*}
\vspace{-2pt}
\subsection{Quantitative evaluation on TotalCapture}
\label{sec:ablate}

We begin by evaluating the different components of our approach and comparing to the state-of-the-art volumetric method of \cite{Qiu19} on the TotalCapture dataset. We report our results in Table \ref{tab:total_cap}.
We observe that by using the feature fusion technique (\textit{Fusion}) we get a significant $19 \%$ improvement
over our \textit{Baseline}, showing that, although simple, this fusion technique is effective.
Our more sophisticated \textit{Canonical Fusion (no DLT)} achieves further $10 \%$ improvement, showcasing that our method can effectively use camera projection operators to better reason about views. 
Finally, training our architecture by back-propagating through the triangulation layer (\textit{Canonical Fusion}) allows to further improve our accuracy by  $3 \%$. This is not surprising as we optimize directly for the target metric when training our network.
Our best performing model outperforms the state-of-the-art volumetric model of \cite{Qiu19} by $\sim5 \%$. Note that their method lifts 2D detections to 3D using Recurrent Pictorial Structures (RPSM), which uses a pre-defined skeleton, as a strong prior, to lift 2D heatmaps to 3D detections. Our method doesn't use any priors, and still outperform theirs. Moreover, our approach is orders of magnitude faster than theirs, as we will show in Section \ref{sec:eval_add}. We show some uncurated test samples from our model in Figure \ref{fig:vis}(a).

\subsection{Generalization to unseen cameras}
\label{sec:eval_unseen}

To assess the flexibility of our approach, we evaluate its performance on images captured from unseen views. To do so, we take the trained network of Section \ref{sec:ablate} and test it on cameras (2,4,6,8). Note that this setting is particularly challenging not only because of the novel camera views, but also because the performer is often out of field of view in camera $2$. 
For this reason, we discard frames where the performer is out of field of view when evaluating our \textit{Baseline}.
We report the results in Table \ref{tab:total_cap_unseen}. 
We observe that \textit{Fusion} fails at generalizing to novel views (accuracy drops by $47.1$mm when the network is presented with new views). This is not surprising as this fusion technique over-fits by design to the camera setting.
On the other hand the accuracy drop of  \textit{Canonical Fusion} is similar to the one of \textit{Baseline} ($\sim10$mm).
Note that our comparison favors \textit{Baseline} by discarding frames when object is occluded.
This experiments validates that our model is able to cope effectively with challenging unseen views.

\subsection{Quantitative evaluation on Human 3.6M}

We now turn to the Human36M dataset, where we first evaluate the different components of our approach, and then compare to the state-of-the-art multi-view methods.
Note that here we consider a setting where no additional data is used to train our models.  We report the results in Table \ref{tab:h36}.
Considering the ablation study, we obtain results that are consistent with what we observed on the TotalCapture dataset: performing simple feature fusion (\textit{Fusion}) yields a  $18 \%$ improvement over the monocular baseline. A further $\sim10 \%$ improvement can be reached by using \textit{Canonical Fusion (no DLT)}.
Finally, training our architecture by back-propagating through the triangulation layer (\textit{Canonical Fusion}) allows to further improve our accuracy by  $7 \%$. We show some uncurated test samples from our model in Figure \ref{fig:vis}(b).

We then compare our model to the state-of-the-art methods. Here we can compare our method to the one of \cite{Qiu19} just by comparing fusion techniques  (see \textit{Canonical Fusion (no DLT)} vs Qui \etal \cite{Qiu19} (no RPSM) in Table \ref{tab:h36}). We see that our methods outperform theirs by $\sim15\%$, which is significant and indicates the superiority of our fusion technique. 
Similar to what observed in Section \ref{sec:ablate}, our best performing method is even superior to the off-line volumetric of \cite{Qiu19}, which uses a strong bone-length prior (Qui \etal \cite{Qiu19} Fusion + RPSM). Our method outperforms all other multi-view approaches by a large margin. Note that in this setting we cannot compare to \cite{Iskakov19}, as they do not report results without using additional data. 
\subsection{Exploiting additional data}
\label{sec:eval_add}
\begin{table}[h]
	\vspace{-8pt}
	\begin{center}
		\scalebox{0.8}{
		\begin{tabular}{c|c|c|c}
			\hline
			Methods & Model size  & Inference Time & MPJPE\\
			\hline
			Qui \etal \cite{Qiu19} Fusion + RPSM &  2.1GB & 8.4s & 26.2 \\ 
			Iskakov \etal \cite{Iskakov19} Algebraic & 320MB & 2.00s & 22.6 \\
			Iskakov \etal \cite{Iskakov19} Volumetric &  643MB & 2.30s & \textbf{20.8}\\
			\hline
			Ours, Baseline & \textbf{244MB} &  \textbf{0.04s} & 34.2 \\
			Ours, Canonical Fusion & 251MB & \textbf{0.04s} & 21.0 \\
			\hline
		\end{tabular}}
	\end{center}
	\caption{Additional training data setup. We compare our method to the state-of-the-art approaches in terms of performance, inference time, and model size on the Human3.6M dataset.}
	\label{tab:bench}
\end{table}

To compare to the concurrent model in \cite{Iskakov19}, we consider a setting in which we exploit additional training data. We adopt the same pre-training strategy as \cite{Iskakov19}, that is we pre-train a monocular pose estimation network on the COCO dataset \cite{Lin14}, and fine-tune jointly on Human3.6M and MPII \cite{Andriluka14} datasets. We then simply use these pre-trained weights to initialize our network. We also report results for \cite{Qiu19}, which trains its detector jointly on MPII and Human3.6M. The results are reported in Table \ref{tab:bench}.

First of all, we observe that \textit{Canonical Fusion} outperforms our monocular baseline by a large margin ($\sim 39 \%$). Similar to what was remarked in the previous section, our method also outperforms \cite{Qiu19}. The gap, however, is somewhat larger in this case ($\sim 20 \%$). 
Our approach also outperforms the triangulation baseline of  (Iskakov \etal \cite{Iskakov19} Algebraic), indicating that our fusion technique if effective in reasoning about multi-view input images. Finally, we observe that our method reaches accuracy comparable to the volumetric approach of (Iskakov \etal \cite{Iskakov19} Volumetric).

To give insight on the computational efficiency of our method, in Table \ref{tab:bench} we report the size of the trained models in memory, and also measure their inference time (we consider a set of 4 images and measure the time of a forward pass on a Pascal TITAN X GPU and report the average over 100 forward passes). Comparing model size, \textit{Canonical Fusion} is much smaller than other models and introduces only a negligible computational overhead compared to our monocular \textit{Baseline}. Comparing the inference time, both our models yield a real-time performance ($\sim 25 fps$) in their un-optimized version, which is much faster than other methods. In particular, it is about 50 times faster than (Iskakov \etal \cite{Iskakov19} Algebraic) due to our efficient implementation of DLT and about 57 times faster than (Iskakov \etal \cite{Iskakov19} Volumetric) due to using DLT plus 2D CNNs instead of a 3D volumetric approach.

\comment{
To compare to the concurrent model in \cite{Iskakov19}, we consider a setting in which we exploit additional training data. We adopt the same pre-training strategy as \cite{Iskakov19}, that is we pre-train a monocular pose estimation network on the COCO dataset \cite{Lin14}, and fine-tune jointly on Human3.6M and the MPII \cite{Andriluka14} dataset.
We then simply use the pre-trained weights to initialize our network. We report results also for \cite{Qiu19}, which trains its detector jointly on MPII and Human3.6M.
To give more insight on the computational efficiency of our method, in Table \ref{tab:bench}, we report the size of trained models in memory, and measure their inference time (we consider a set of 4 images and measure time of a forward pass on a Pascal TITAN X GPU).
First of all, we observe that \textit{Canonical Fusion} outperforms our monocular baseline by a large margin ($\sim 39 \%$), while introducing negligible computational overhead: both methods, in their un-optimized version, yield real-time performance ($\sim 25 fps$).
Similarly to what remarked in the previous section our method outperforms \cite{Qiu19} also in this setting. The gap, however, is somewhat larger in this case ($\sim 20 \%$) : we attribute this to the different pre-training strategy. Note that our network is much smaller, and our method is overall much faster.
Our approach also outperforms the triangulation baseline of \cite{Iskakov19}  (Iskakov \etal \cite{Iskakov19} Algebraic), indicating that our fusion technique if effective in reasoning about multi-view input. Note that our method is also faster, mainly due to our efficient implementation of DLT.
Finally, we observe that our method reaches accuracy comparable to the volumetric approach of \cite{Iskakov19}  (Iskakov \etal \cite{Iskakov19} Algebraic), while being 5 times faster, and having a smaller memory footprint.
}

\vspace{-2pt}
\section{Conclusions}
\vspace{-4pt}
We propose a new multi-view fusion technique for 3D pose estimation that is capable of reasoning across multi-view geometry effectively, while introducing negligible computational overhead with respect to monocular methods.
Combined with our novel formulation of DLT transformation, this results in a real-time approach to 3D pose estimation from multiple cameras.
We report the state-of-the-art performance on standard benchmarks when using no additional data, flexibility to unseen camera settings, and accuracy comparable to far-more computationally intensive volumetric methods when allowing for additional 2D annotations.

\section{Acknowledgments}
We would like to thank Giacomo Garegnani for the numerous and insightful discussions on singular value decomposition. This work was completed during an internship at Facebook Reality Labs, and supported in part by the Swiss National Science Foundation.


{\small
\bibliographystyle{ieee_fullname}
\bibliography{egbib} 
}

\section{Supplementary Material}
\subsection{Architectures}

\begin{figure*}[t]
	\centering
	\begin{minipage}[h]{1.0\textwidth}
		\begin{overpic}[clip, trim=1.5cm 0cm 1.5cm 0cm,width=1.0\textwidth]{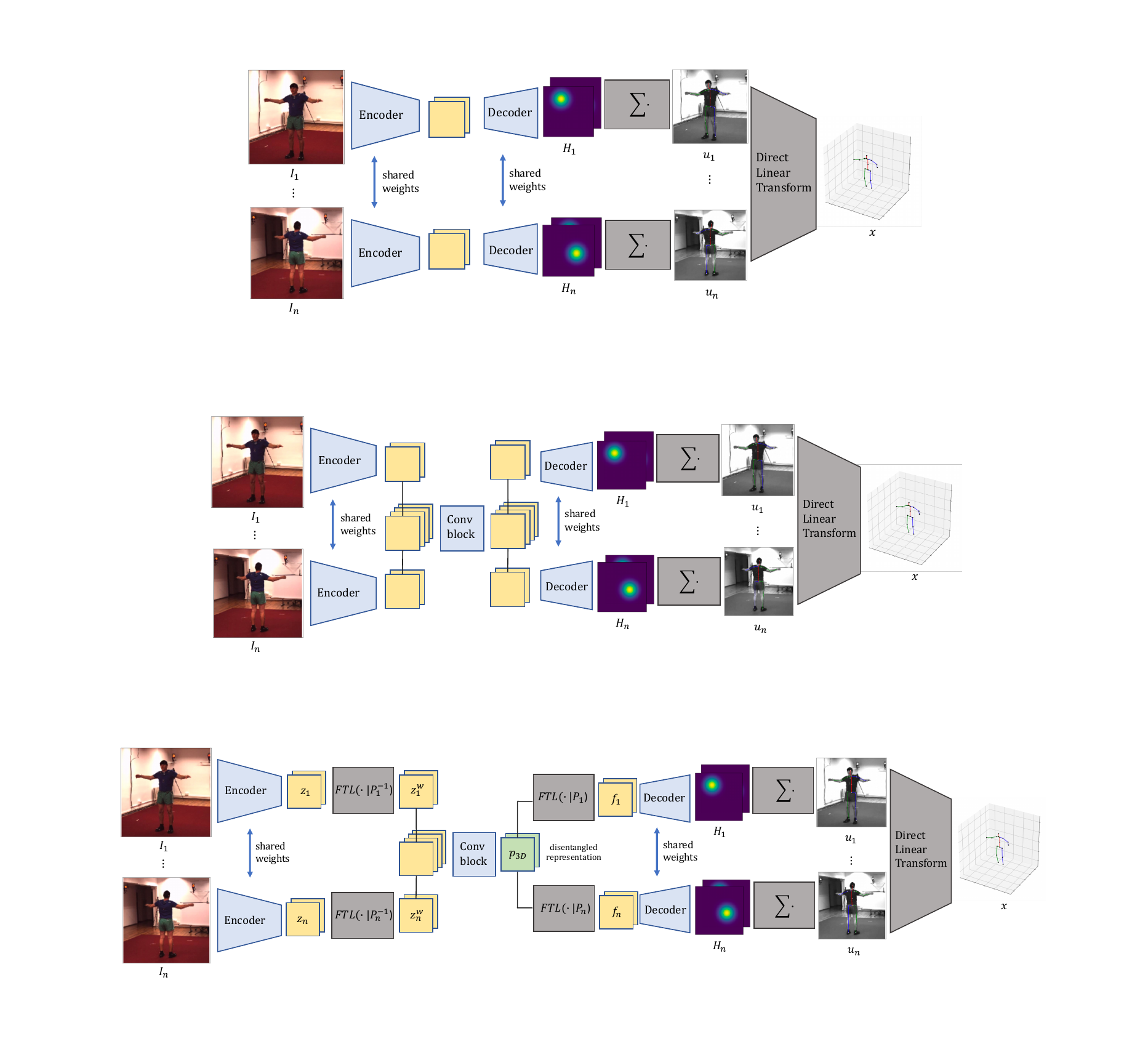}
			\put(42,68){a) \large{\textbf{Baseline}}}
			\put(42,35){b) \large{\textbf{Fusion}}}
			\put(38,2){c) \large{\textbf{Canonical Fusion}}}
		\end{overpic}
	\end{minipage}
	\vspace{1.5cm}
	\caption{Overview of different multi-view architectures: a) \textit{baseline}, which detects 2D locations of joints for each view separately and then lifts detections to 3D via DLT triangulation. b) the multi-view feature fusion technique (\textit{fusion}) that performs joint reasoning in the latent space, similar in spirit to the methods of \cite{Kadkhodamohammadi18, Qiu19}. This approach does not exploit epipolar geometry and hence overfits to the camera setting. c) our novel fusion method (\textit{canonical fusion}), exploiting camera transform layers to fuse views flexibly into a unified pose representation that is disentangled from camera view-points and thus can generalize to novel views.
	}
	\label{fig:archis}
\end{figure*}
In Figure \ref{fig:archis}, we depict the different architectures (\textit{baseline}, \textit{fusion}, \textit{canonical fusion}) compared in the main article. 
Recall that our encoder consists of a ResNet152 \cite{He16} backbone pre-trained on ImageNet \cite{Deng09} for all three architectures, taking in $256\times 256$ image crops as input and producing $2048\times 18 \times 18$ features maps. 
Similarly, all methods share the same convolutional decoder, consisting of
\begin{itemize}
	\item ConvTranspose2D(2048, 256) + BatchNorm + ReLU
	\item ConvTranspose2D(256, 256) + BatchNorm + ReLU
	\item ConvTranspose2D(256, 256) + BatchNorm + ReLU
	\item Conv2D(256, K).
\end{itemize} 
This produces $K\times 64 \times 64$ output heatmaps, where $K$  is the number of joints. The only difference between the networks is in the feature fusion module, respectively defined as follows:
\begin{itemize}
	\item \textit{baseline}: no feature fusion.
	\item \textit{fusion}: a $1 \times 1$ convolution is first applied to map features from $2048$ channels to $300$. Then, the feature maps from different views are concatenated to make a feature map of size $n\times 300$, where $n$ indicates the number of views. This feature map is then processed jointly by two $1 \times 1$ convolutional layers, finally producing a feature map with $n\times 300$ channels, which is later split into view-specific feature maps with $300$ channels in each view. Each view-specific feature map is then lifted back to $2048$ channels.
	\item \textit{canonical fusion}: a $1 \times 1$ convolution is first applied  to map features from $2048$ channels to $300$. The feature maps from different views are then transformed to a shared canonical representation (world coordinate system) by feature transform layers.  Once they live in the same coordinate system, they are concatenated into a $n\times 300$ feature map and processed jointly by two $1 \times 1$ convolutional layers, producing a \textit{unified} feature map with $300$ channels that is disentangled from the camera view-point. This feature map, denoted as $p_{\text{3D}}$ in the main article, is then projected back to each view-point by using feature transform layers and the corresponding camera transform matrix. Finally each view-specific feature map is mapped back to $2048$ channels. Note that in contrast to \textit{fusion} that learns separate latent representations for different views, in \textit{canonical fusion} all views are reconstructed from the same latent representation, effectively forcing the model to learn a unified representation across all views.
\end{itemize}

\subsection{Efficient Direct Linear Transformation}
\label{sec:DLT}

In this section we prove Theorem \ref{thm:sing} from the main article, and then illustrate how in practice we use it to design an efficient algorithm for Direct Linear Transformation by using Shifted Inverse Iterations method \cite{Quarteroni10}. Finally, we provide some insight on why SVD is not efficient on GPUs (see Figure 3d in the main article).

\begin{theorem}
	\label{thm:sing}
	Let $A$ be the DLT matrix associated with the non-perturbed case, i.e. $\sigma_{\text{min}}(A)  = 0$. 
	Let us assume i.i.d Gaussian noise $\varepsilon =(\varepsilon_u,\varepsilon_v)\sim \mathcal N(0,s^2I)$ in our 2d observations, i.e. $(u^*, v^*) = (u + \varepsilon _u, v + \varepsilon _v)$, and let us denote $A^*$ the DLT matrix associated with the perturbed system. 
	Then, it follows that:
	\begin{align}
	\label{eq:thm}
	0 \leq \mathbb E [\sigma_{\text{min}}(A^*)] \leq  C s, \, \, \text{where } C=C(\{u_i, P_i \}_{i=1}^N)
	\end{align}
\end{theorem}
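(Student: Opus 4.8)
The plan is to exploit the fact that the DLT matrix depends \emph{affinely} on the 2D observations. From Equations~(\ref{eq:dlt_x})--(\ref{eq:dlt_y}), the two rows of $A$ contributed by view $i$ are $u_i p_i^{3T}-p_i^{1T}$ and $v_i p_i^{3T}-p_i^{2T}$. Replacing $(u_i,v_i)$ by $(u_i+\varepsilon_{u,i},\,v_i+\varepsilon_{v,i})$ thus gives
\begin{align}
A^* = A + E ,
\end{align}
where $E$ is the (random) matrix whose $i$-th pair of rows is $\varepsilon_{u,i}\,p_i^{3T}$ and $\varepsilon_{v,i}\,p_i^{3T}$. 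The key point is that $E$ is \emph{linear} in the noise, with no higher-order terms; this is the observation I would build the whole argument on.

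Next I would use the hypothesis $\sigma_{\text{min}}(A)=0$, which guarantees the existence of a unit vector $\mathbf x_0$ (the true homogeneous 3D point, rescaled so that $\|\mathbf x_0\|=1$) with $A\mathbf x_0=\mathbf 0$. Plugging $\mathbf x_0$ into the variational characterization $\sigma_{\text{min}}(A^*)=\min_{\|\mathbf x\|=1}\|A^*\mathbf x\|$ yields
\begin{align}
\sigma_{\text{min}}(A^*) \;\le\; \|A^*\mathbf x_0\| \;=\; \|(A+E)\mathbf x_0\| \;=\; \|E\mathbf x_0\| \;\le\; \|E\|_2 ,
\end{align}
which is just Weyl's perturbation inequality $|\sigma_{\text{min}}(A^*)-\sigma_{\text{min}}(A)|\le\|E\|_2$ in disguise. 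The lower bound $0\le\mathbb E[\sigma_{\text{min}}(A^*)]$ is immediate from non-negativity of singular values.

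It then remains to bound $\mathbb E[\|E\|_2]$. I would pass to the Frobenius norm, $\|E\|_2\le\|E\|_F$, and read off from the explicit rows of $E$ that
\begin{align}
\|E\|_F^2 = \sum_{i=1}^n \big(\varepsilon_{u,i}^2+\varepsilon_{v,i}^2\big)\,\|p_i^3\|_2^2 ,
\end{align}
so that $\mathbb E[\|E\|_F^2] = 2s^2\sum_{i=1}^n\|p_i^3\|_2^2$ since $\mathbb E[\varepsilon_{u,i}^2]=\mathbb E[\varepsilon_{v,i}^2]=s^2$. Jensen's inequality then gives $\mathbb E[\|E\|_F]\le\big(\mathbb E[\|E\|_F^2]\big)^{1/2}=Cs$ with $C=\big(2\sum_{i=1}^n\|p_i^3\|_2^2\big)^{1/2}$, a quantity depending only on the camera matrices (a fortiori on $\{u_i,P_i\}_{i=1}^n$). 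Chaining the three displays closes the proof.

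I do not expect a genuine obstacle here: once the affine dependence of $A$ on the observations is recognized, the rest is elementary matrix-perturbation bookkeeping. The one place that needs care is that if one works with the row-normalized (``normalized DLT'') variant of $A$, the perturbation $A^*-A$ is no longer exactly linear, so the per-row rescaling factors must be Taylor-expanded in $\varepsilon$; this is where a genuine dependence of $C$ on the $u_i$ (rather than on the $P_i$ alone) would appear, and one would keep only the first-order term while controlling the remainder. A second, minor point is to note that the null vector $\mathbf x_0$ may be taken of unit norm because $A\mathbf x_0=\mathbf 0$ is scale-invariant.
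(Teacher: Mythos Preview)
Your proof is correct and follows essentially the same route as the paper: both recognize the affine structure $A^* = A + E$ (the paper writes $E=\Sigma P$ with $\Sigma=\mathrm{diag}(\varepsilon_i)$) and then apply the Weyl-type perturbation bound $\sigma_{\min}(A^*)\le\|E\|_2$. The only difference is in the last step: the paper bounds $\|E\|_2\le\|\Sigma\|_2\|P\|_2\le\|P\|_2\max_i|\varepsilon_i|\le\|P\|_2\sum_i|\varepsilon_i|$ and uses the half-normal mean $\mathbb E[|\varepsilon_i|]=s\sqrt{2/\pi}$ to obtain $C=2n\sqrt{2/\pi}\,\|P\|_2$, whereas you pass to the Frobenius norm and use Jensen, which is slightly cleaner and gives the sharper constant $C=\big(2\sum_i\|p_i^3\|_2^2\big)^{1/2}=\|P\|_F$.
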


	\begin{proof}
	Let us recall the structure of matrix $A \in \mathbb R ^ {2n \times 4}$, which is the DLT matrix for non-noisy 2D observations:
	\begin{align}
	A = \begin{bmatrix} 
	\vdots  \\
	u_i p_i ^ {3T} - p_i ^ {1T} \\
	v_i p_i ^ {3T} - p_i ^ {2T}  \\
	\vdots  \\
	\end{bmatrix}.
	\end{align}
	
Now considering noisy observations $(u^* _ i, v^* _ i) = (u _ i + \varepsilon _{2i}, v _ i + \varepsilon _{2i+1})$, where we drop the subscripts $u,v$ from $\varepsilon$ (as noise is i.i.d.),  the DLT matrix can be written as

\begin{align}
A^* &= \begin{bmatrix} 
\vdots  \\
(u_i +  \varepsilon _{2i}) \, p_i ^ {3T} - p_i ^ {1T} \\
(v_i +  \varepsilon _{2i+1}) \, p_i ^ {3T} - p_i ^ {2T}  \\
\vdots  \\
\end{bmatrix},
\end{align}
which is equivalent to
\begin{align}
A^* &= A + \begin{bmatrix} 
\vdots  \\
  \varepsilon _{2i} \, p_i ^ {3T}  \\
 \varepsilon _{2i+1} \, p_i ^ {3T}  \\
\vdots  \\
\end{bmatrix} \\ 
 &= A + \begin{bmatrix} 
 \ddots  \\
&  \varepsilon _{2i} & &    \\
 & & \varepsilon _{2i+1}  & \\
 & & & \ddots  \\
 \end{bmatrix}
 \begin{bmatrix} 
 \vdots  \\
 p_i ^ {3T}    \\
 p_i ^ {3T}  \\
 \vdots  \\
 \end{bmatrix} \\
 &= A + \Sigma P, \label{eq:def}
\end{align}
where $\Sigma \in \mathbb R ^ {2n \times 2n}$ and $P \in \mathbb R ^ {2n \times 4}$.

Using the classical perturbation theory (see Stewart \etal \cite{Stewart98} for an overview), we can write
\begin{align}
| \sigma_{\text{min}}(A^*) - \sigma_{\text{min}}(A) | \leq \| A^* - A \|_2.
\end{align}

By exploiting $\sigma_{\text{min}}(A) = 0$, Equation \ref{eq:def}, and the fact that singular values are always positive we can infer
\begin{align}
 \sigma_{\text{min}}(A^*)  \leq \| \Sigma P \|_2.
\end{align}

Then by leveraging Cauchy-Schwartz inequality \cite{Cauchy21} and recalling that the norm 2 of a diagonal matrix is bounded by the absolute value of the biggest element in the diagonal we get
\begin{align}
\sigma_{\text{min}}(A^*)  \leq \| \Sigma \|_2 \| P \|_2 \leq \| P \|_2  \max_i | \varepsilon _{i}|.
\end{align}

Recall that that the max of $2n$ i.i.d. variables is smaller than their sum, so we can write
\begin{align}
\sigma_{\text{min}}(A^*)  \leq \| P \|_2  \sum _ {i=0} ^{2n-1}  |\varepsilon _{i}|. \label{eq:almost}
\end{align}

We can then simply take the expected value on both sides  of Equation (\ref{eq:almost}) and obtain
\begin{align}
\mathbb E \Big[ \sigma_{\text{min}}(A^*) \Big]  &\leq \mathbb E \Big[ \| P \|_2   \sum _ {i=0} ^{2n-1}  |\varepsilon _{i}|\Big] \\
&\leq \| P \|_2   \sum _ {i=0} ^{2n-1} \mathbb E [|\varepsilon _{i}|] \\
&\leq \| P \|_2 \, 2n \, \mathbb E [|\varepsilon _{0}|].
\end{align}

Knowing that the expected value of the \textit{half-normal} distribution is $ E [|\varepsilon _{i}|]  = s \sqrt{2/\pi}$  we finally obtain
\begin{align}
\mathbb E [\sigma_{\text{min}}(A^*)] \leq 2n \sqrt{2/\pi} \| P \|_2 \, s = C s.
\end{align}

The other side of inequality (\ref{eq:thm}) trivially follows from the fact that singular values are always positive. 
\end{proof}

In the main article, we proposed (in Algorithm 1) to find the singular vector of $A^*$ associated with $\sigma_{\text{min}}(A^*)$ by means of Shifted Inverse Iterations (SII) \cite{Quarteroni10} applied to matrix $A^{*T}A^*$. 
This iterative algorithm (which takes as input a singular value estimate $\mu$) has the following properties:
\begin{enumerate}
	\item The iterations will converge to the eigenvector that is closest to the provided estimate.
	\item The rate of convergence of the algorithm is geometric, with ratio $ \dfrac{\sigma_{\text{4}}(A^*)+\mu }{\sigma_{\text{3}}(A^*) + \mu }$, where $\sigma_{3} \geq \sigma_{4} = \sigma_{\text{min}}$.
\end{enumerate}

Combining property 1 with the result of Theorem \ref{thm:sing} ascertains that Algorithm 1 will converge to the desired singular vector if we provide it with a small value for $\mu$. Although in theory we could set $\mu=0$, in practice we choose $\mu = 0.001$ to avoid numerical instabilities when matrix $A^{*T}A^*$ is close to being singular.

Note also that property 2 is confirmed by what we see in Figure 3b in the main article, where the number of iterations needed by the algorithm to reach convergence increases with more Gaussian noise in the 2D observation. In practice, we have found two iterations to be sufficient in our experiments.
\\
\\
\textbf{SVD parallelization on GPU.} In our experiments, carried in \texttt{PyTorch v1.3} on a Pascal TITAN X GPU, we found DLT implementations based on Singular Value Decomposition (SVD) to be inefficient on GPU (see Figure 3d in the main paper). Below we provide an insight on why this is the case.

SVD numerical implementations \cite{Dongarra14} involve two steps:
\begin{enumerate}
	\item Two orthogonal matrices Q and P are applied to the left and right of matrix $A$, respectively, to reduce it to a bidiagonal
	form, $B = Q^TAP$.
	\item Divide and conquer or QR iteration is then used to
find both singular values and left-right singular vectors of $B$ yielding
$ B = \bar U^T \Sigma \bar V$. Then, singular vectors of $B$
are back-transformed to singular vectors of $A$ by $U = Q \bar U$ and $V = \bar V P$.
\end{enumerate} 

There are many ways to formulate these problems mathematically and solve them numerically, but in all cases, designing an efficient computation is challenging because
of the nature of the reduction algorithm. In particular, the orthogonal transformations applied
to the matrix are two-sided, i.e., transformations are applied on both the left and
the right side of the matrix. This creates data dependencies and prevents the use of standard techniques to increase the computational efficiency of the operation, for example
blocking and look-ahead, which are used extensively in the one-sided algorithms (such as in LU, QR, and
Cholesky factorizations \cite{Dongarra14}). A recent work \cite{Wang19} has looked into ways to increase stability of SVD while reducing its computational time.  Similarly, we also found SVD factorization to be slow, which motivated us to design  
a more efficient solution involving only GPU-friendly operations (see Algorithm 1 in the main article).  

\subsection{Feature Transform Layer}
\begin{figure*}[t]
	\centering
	\begin{overpic}[clip, trim=0cm 0cm 0cm 0cm,width=1.0\textwidth]{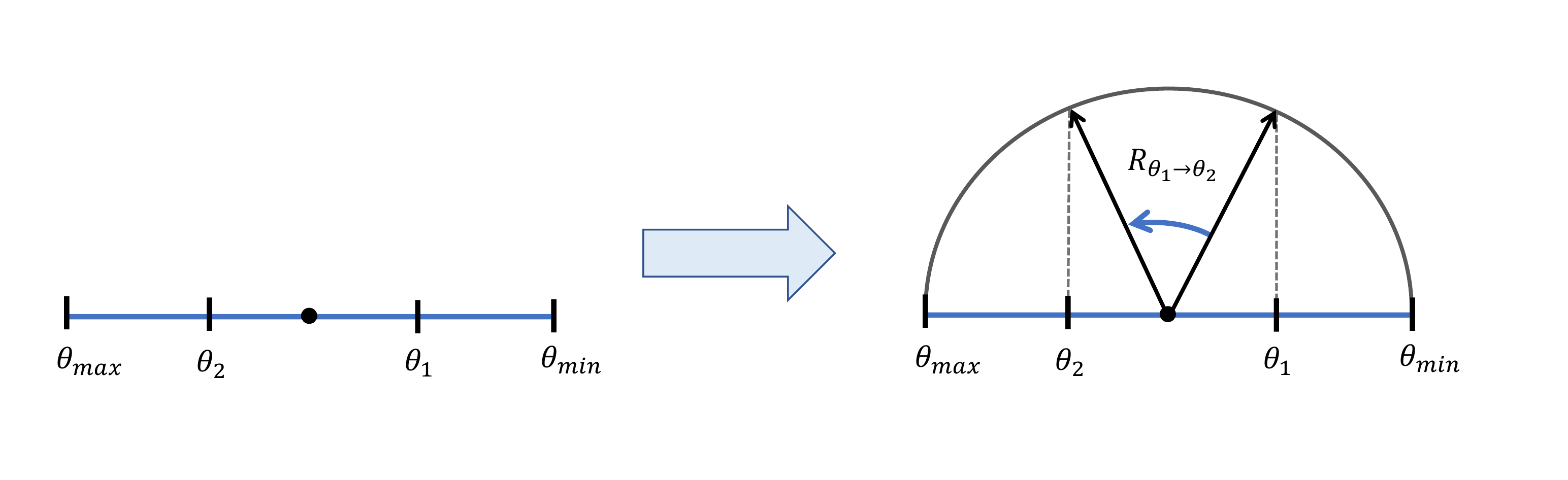}
		\put (35,26) {$ \cos \theta = \dfrac{ \theta - 0.5*(\theta_{\text{max}}+ \theta_{\text{min}})}{ 0.5*(\theta_{\text{max}}- \theta_{\text{min}})}$}
		\put (35,20) {$ \sin \theta = \sqrt{1 -  \cos ^ 2 \theta }$}
		
		\put (77,27) {$ R_ {\theta} = \begin{bmatrix}   \cos \theta &  \sin \theta \\
		- \sin\theta & \cos \theta \\
		\end{bmatrix}$}
		
	\end{overpic}
	\caption{FTL encodes transformations by mapping them onto circles in the feature space. Consider the setting in which a factor of variation $\theta$ (e.g. $x$-component of camera position in world coordinates), defined in the interval $\theta \in [\theta_{\text{min}}, \theta_{\text{max}}]$, changes from $\theta=\theta_1$ to $\theta=\theta_2$.
	Exploiting trigonometry, we can map this transformation onto a circle, as depicted on the right-hand side of the figure, where the transformation is defined as a rotation. 
	}
	\label{fig:ftl}
\end{figure*}
Below we first review feature transform layers (FTLs), introduced in \cite{Worrall17} as an effective way to learn interpretable embeddings. Then we explain how FTLs are used in our approach.

Let us consider a representation learning task, where images $\mathbf X$ and $\mathbf Y$ are related by a known transform $T$ and the latent vector $\mathbf x$ is obtained from $\mathbf X$ via an encoder network. The feature transform layer performs a linear transformation on $\mathbf x$ via transformation matrix $F_T$ such that the output of the layer is defined as
\begin{align}
\mathbf y = F_T [\mathbf x]   = F_T \mathbf x,
\end{align}
where  $\mathbf y$ is the transformed representation. Finally $\mathbf y$ is decoded to reconstruct the target sample $\mathbf Y$.
This operation forces the neural network to learn a mapping from image-space to feature-space while preserving the intrinsic structure of the transformation. 

In practice, the transforming matrix $F_T$ should be chosen such that it is invertible and norm preserving. To this end \cite{Worrall17} proposes to use rotations since they are simple and respect these properties. Periodical transformations can trivially be converted to rotations. Although less intuitive, arbitrary transformation defined on an interval can also be thought of as rotations by mapping them onto circles in feature space. Figure \ref{fig:ftl} illustrates in detail how to compute this mapping.

Note that if $\mathbf X$ and $\mathbf Y$ differ by more than one factor of variation, disentanglement can be achieved by transforming features as follows:
\begin{align}
\mathbf y = F_{T_1, \dots, T_n} [\mathbf x]   = \begin{bmatrix} 
 F_{T_1} \\
& \ddots &    \\
& &  F_{T_n}  \\
\end{bmatrix} \mathbf x .
\end{align}

In \cite{Worrall17} FTLs are presented as a way to learn representations from data that are 1) interpretable, 2) disentangled, and 3) better suited for down-stream tasks, such as classification.

In our work, we use FTLs to feed camera transformations explicitly into the network in order to design an architecture that can reason both efficiently and effectively about epipolar geometry in the latent space. As a consequence, the model learns a camera-disentangled representation of 3D pose, that recovers 2D joint locations from multi-view input imagery. This shows that FTLs can be used to learn disentangled latent representations also in supervised learning tasks.

\subsection{Additional results}
In Figures \ref{fig:h36_res}  and \ref{fig:tc_res} we provide additional visualizations, respectively for TotalCapture (using both seen and unseen cameras) and Human3.6M datasets. These uncurated figures illustrate the quality of our predictions. We encourage the reader to look at our supplementary videos for further qualitative results. 

\begin{figure*}[t]
	\centering
	\vspace{-8pt}
	\begin{overpic}[clip, trim=0cm 0cm 0cm 0cm,width=1.0\textwidth]{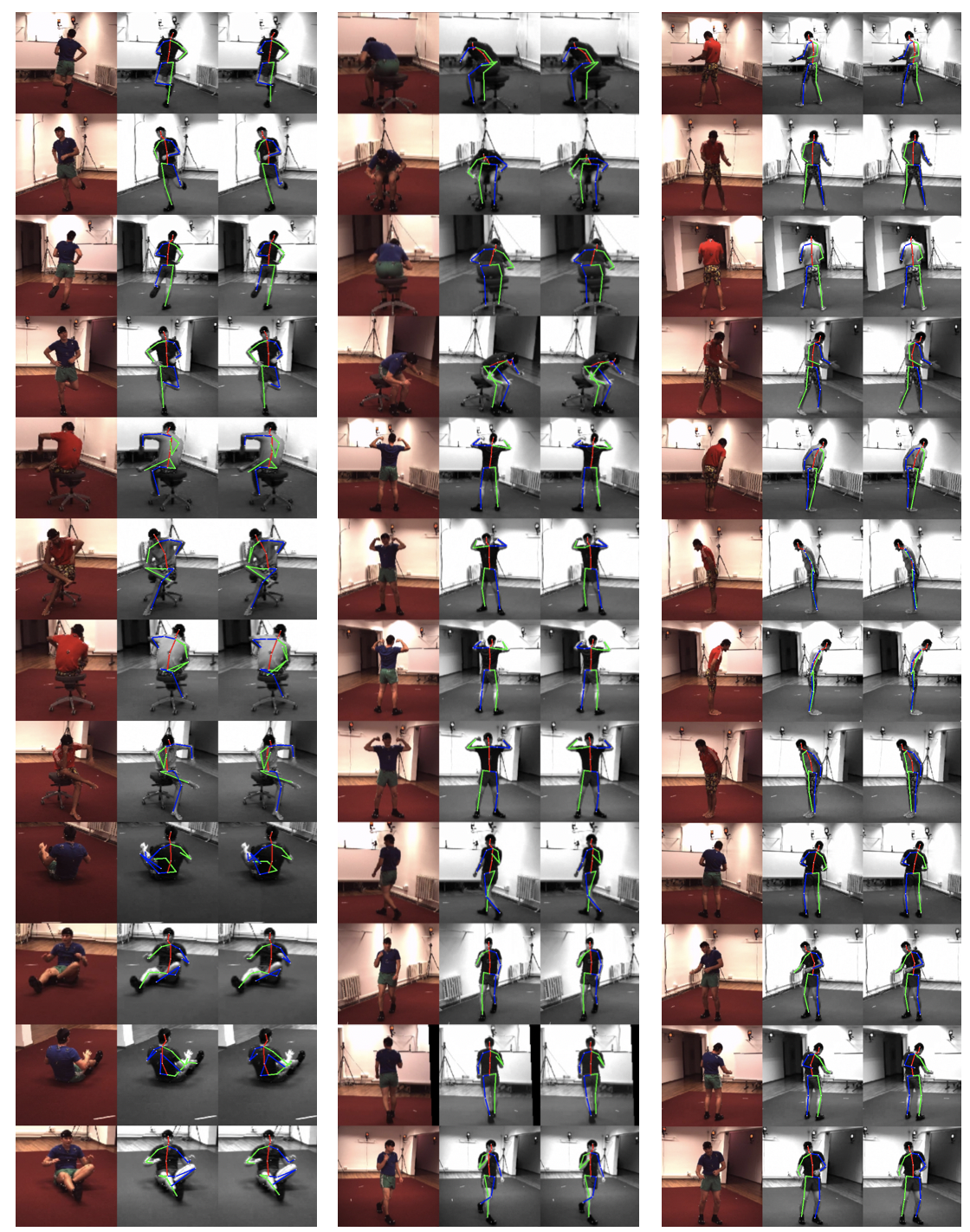}
		\put (4,100) {\small{input}}
		\put (10,100) {\small{ground truth}}
		\put (19,100) {\small{prediction}}
		
	\put (30,100) {\small{input}}
	\put (36,100) {\small{ground truth}}
	\put (45,100) {\small{prediction}}
		
	\put (56,100) {\small{input}}
	\put (62,100) {\small{ground truth}}
	\put (71,100) {\small{prediction}}
	
	\put (0,95) {\small{1}}
	\put (0,86.5) {\small{2}}
	\put (0,78) {\small{3}}
	\put (0,70.0) {\small{4}}

	\put (0,62) {\small{1}}
	\put (0,53) {\small{2}}
	\put (0,45.5) {\small{3}}
	\put (0,37.0) {\small{4}}

	\put (0,29) {\small{1}}
	\put (0,20.5) {\small{2}}
	\put (0,12.5) {\small{3}}
	\put (0,4) {\small{4}}

	\put (26.2,95) {\small{1}}
	\put (26.2,86.5) {\small{2}}
	\put (26.2,78) {\small{3}}
	\put (26.2,70.0) {\small{4}}

	\put (26.2,62) {\small{1}}
	\put (26.2,53) {\small{2}}
	\put (26.2,45.5) {\small{3}}
	\put (26.2,37.0) {\small{4}}

	\put (26.2,29) {\small{1}}
	\put (26.2,20.5) {\small{2}}
	\put (26.2,12.5) {\small{3}}
	\put (26.2,4) {\small{4}}
	
	\put (52.5,95) {\small{1}}
	\put (52.5,86.5) {\small{2}}
	\put (52.5,78) {\small{3}}
	\put (52.5,70.0) {\small{4}}

	\put (52.5,62) {\small{1}}
	\put (52.5,53) {\small{2}}
	\put (52.5,45.5) {\small{3}}
	\put (52.5,37.0) {\small{4}}

	\put (52.5,29) {\small{1}}
	\put (52.5,20.5) {\small{2}}
	\put (52.5,12.5) {\small{3}}
	\put (52.5,4) {\small{4}}
		
	\end{overpic}
	\caption{Randomly picked samples from the test set of Human3.6M. Numbers denote cameras.
		To stress that the pose representation learned by our network is effectively \textit{disentangled} from the camera view-point, we intentionally show predictions \textit{before} triangulating them, rather than re-projecting triangulated keypoints to the image space. 
	}
	\label{fig:h36_res}
\end{figure*}

\begin{figure*}[t]
	\centering
	\begin{overpic}[clip, trim=0cm 0cm 0cm 0cm,width=1.0\textwidth]{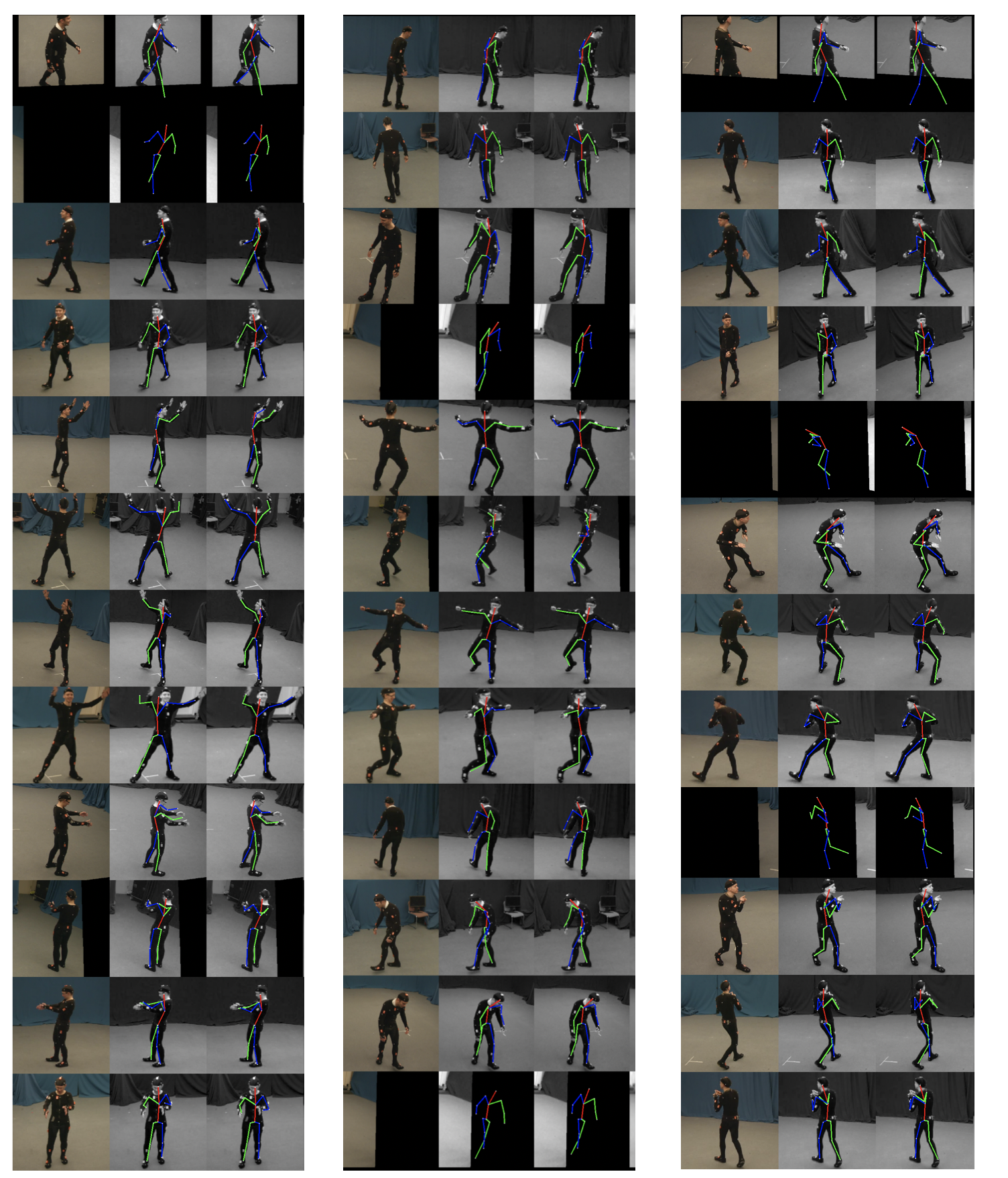}
		
		\put (21,103) {\large{{\textbf{seen camera-setting}}}}
		\put (62,103) {\large{{\textbf{new camera-setting}}}}
		
		\put (4,99) {\small{input}}
		\put (9.5,99) {\small{ground truth}}
		\put (18,99) {\small{prediction}}
		
		\put (31,99) {\small{input}}
		\put (37,99) {\small{ground truth}}
		\put (46,99) {\small{prediction}}
		
		\put (60,99) {\small{input}}
		\put (66,99) {\small{ground truth}}
		\put (75,99) {\small{prediction}}
		
		\put (0,94) {\small{1}}
		\put (0,86.5) {\small{3}}
		\put (0,78) {\small{5}}
		\put (0,70.0) {\small{7}}

		\put (0,62) {\small{1}}
		\put (0,53.5) {\small{3}}
		\put (0,45.5) {\small{5}}
		\put (0,37.0) {\small{7}}

		\put (0,29) {\small{1}}
		\put (0,20.5) {\small{3}}
		\put (0,12.5) {\small{5}}
		\put (0,4) {\small{7}}
		
		\put (28,94) {\small{1}}
		\put (28,86.5) {\small{3}}
		\put (28,78) {\small{5}}
		\put (28,70.0) {\small{7}}

		\put (28,61.5) {\small{1}}
		\put (28,53.5) {\small{3}}
		\put (28,45.5) {\small{5}}
		\put (28,37.0) {\small{7}}

		\put (28,29) {\small{1}}
		\put (28,20.5) {\small{3}}
		\put (28,12.5) {\small{5}}
		\put (28,4) {\small{7}}
		
		\put (56.8,94) {\small{2}}
		\put (56.8,86.5) {\small{4}}
		\put (56.8,78) {\small{6}}
		\put (56.8,70.0) {\small{8}}

		\put (56.8,61) {\small{2}}
		\put (56.8,53.5) {\small{4}}
		\put (56.8,45.5) {\small{6}}
		\put (56.8,37.0) {\small{8}}

		\put (56.8,29) {\small{2}}
		\put (56.8,20.5) {\small{4}}
		\put (56.8,12.5) {\small{6}}
		\put (56.8,4) {\small{8}}

	\end{overpic}
	\caption{Randomly picked samples from the test set of TotalCapture.
		Numbers denote cameras.
		In the two left columns we test our model on unseen images captured from seen camera view-points. In the right column, instead, we use images captured from unseen camera view-points.
		To stress that the pose representation learned by our network is effectively \textit{disentangled} from the camera view-point, we intentionally show predictions \textit{before} triangulating them, rather than re-projecting triangulated keypoints to the image space. 
	}
	\label{fig:tc_res}
\end{figure*}

\end{document}